\def\eqref#1{equation~\ref{#1}}
\def\Eqref#1{Equation~\ref{#1}}
\def\1{\bm{1}}
\def\vv{{\bm{v}}}
\def\mA{{\bm{A}}}
\def\mD{{\bm{D}}}
\def\mH{{\bm{H}}}
\def\mI{{\bm{I}}}
\def\mM{{\bm{M}}}
\def\mS{{\bm{S}}}
\def\mW{{\bm{W}}}
\def\mX{{\bm{X}}}
\DeclareMathAlphabet{\mathsfit}{\encodingdefault}{\sfdefault}{m}{sl}
\SetMathAlphabet{\mathsfit}{bold}{\encodingdefault}{\sfdefault}{bx}{n}
\newcommand{\graph}[1]{\ensuremath{\mathcal{#1}}}
\newcommand{\classes}[1]{\ensuremath{\mathcal{#1}}}
\newcommand{\vertices}[1]{\ensuremath{\mathcal{#1}}}
\newcommand{\loss}[1]{\ensuremath{\mathcal{#1}}}
\newcommand{\Th}[1]{\ensuremath{#1^{\text{th}}}}
\newcommand{\func}[1]{\ensuremath{\mathsf{#1}}}
\newtheorem{theorem}{Theorem}
\newtheorem{proof}{Proof}
\newtheorem{hypothesis}{Hypothesis}
\title{SLAPS: Self-Supervision Improves Structure Learning for Graph Neural Networks}
\author{%
  Bahare Fatemi\thanks{Work was done when authors were at Borealis AI.} \\
  University of British Columbia\\
  \texttt{bfatemi@cs.ubc.ca}
   \AND
   Layla El Asri \\
   Borealis AI \\
   \texttt{layla.elasri@borealisai.com} \\
   \And
   Seyed Mehran Kazemi$^*$\\
   Google Research \\
   \texttt{mehrankazemi@google.com} \\
}
\begin{document}

\maketitle

\begin{abstract}
Graph neural networks (GNNs) work well when the graph structure is provided. However, this structure may not always be available in real-world applications.
One solution to this problem is to infer a task-specific latent structure and then apply a GNN to the inferred graph. Unfortunately, the space of possible graph structures grows super-exponentially with the number of nodes and so the task-specific supervision may be insufficient for learning both the structure and the GNN parameters.
In this work, we propose the \textbf{S}imultaneous \textbf{L}earning of \textbf{A}djacency and GNN \textbf{P}arameters with \textbf{S}elf-supervision, or SLAPS, a method that provides more supervision for inferring a graph structure through self-supervision. 
A comprehensive experimental study demonstrates that SLAPS scales to large graphs with hundreds of thousands of nodes and outperforms several models that have been proposed to learn a task-specific graph structure on established benchmarks.
\end{abstract}
\section{Introduction} \label{sec:intro}
Graph representation learning has grown rapidly and found applications in domains where a natural graph of the data points is available \citep{chami2020machine,kazemi2020relational}. Graph neural networks (GNNs) \citep{scarselli2008graph} have been a key component to the success of the research in this area. Specifically, GNNs have shown promising results for semi-supervised classification when the available graph structure exhibits a high degree of homophily (i.e. connected nodes often belong to the same class) \cite{zhu2020beyond}. 

We study the applicability of GNNs to (semi-supervised) classification problems where a graph structure is \emph{not} readily available. The existing approaches for this problem either fix a similarity graph between the nodes or learn the GNN parameters and a graph structure simultaneously (see Related Work). In both cases, one main goal is to construct or learn a graph structure with a high degree of homophily with respect to the labels to aid the GNN classification. The latter approach is sometimes called \emph{latent graph learning} and often results in higher predictive performance compared to the former approach (see, e.g., \cite{franceschi2019learning}).

We identify a supervision starvation problem in latent graph learning approaches in which the edges between pairs of nodes that are far from labeled nodes receive insufficient supervision; this results in learning poor structures away from labeled nodes and hence poor generalization. We propose a solution for this problem by adopting a multi-task learning framework in which we supplement the classification task with a self-supervised task. The self-supervised task is based on the hypothesis that a graph structure that is suitable for predicting the node features is also suitable for predicting the node labels. It works by masking some input features (or adding noise to them) and training a separate GNN aiming at updating the adjacency matrix in such a way that it can recover the masked (or noisy) features. The task is generic and can be combined with several existing latent graph learning approaches. 

We develop a latent graph learning model, dubbed SLAPS, that adopts the proposed self-supervised task. We provide a comprehensive experimental study on nine datasets (thirteen variations) of various sizes and from various domains and perform thorough analyses to show the merit of SLAPS.

Our main contributions include: 
1) identifying a supervision starvation problem for latent graph learning, 
2) proposing a solution for the identified problem through self-supervision, 
3) developing SLAPS, a latent graph learning model that adopts the self-supervised solution, 
4) providing comprehensive experimental results showing SLAPS substantially outperforms existing latent graph learning baselines from various categories on various benchmarks, and 
5) providing an implementation for latent graph learning that scales to graphs with hundreds of thousands of nodes.

\section{Related work} \label{sec:rel-work}
Existing methods that relate to this work can be grouped into the following categories. We discuss selected work from each category and refer the reader to \cite{zhu2021deep} for a full survey.

\textbf{Similarity graph:}
One approach for inferring a graph structure is to select a similarity metric and set the edge weight between two nodes to be their similarity \citep{roweis2000nonlinear,tenenbaum2000global,belkin2006manifold}. To obtain a sparse structure, one may create a kNN similarity graph, only connect pairs of nodes whose similarity surpasses some predefined threshold, or do sampling. As an example, in \cite{gidaris2019generating} a (fixed) kNN graph using the cosine similarity of the node features is created.
In \cite{DGCNN}, this idea is extended by creating a fresh graph in each layer of the GNN based on the node embedding similarities in that layer. Instead of choosing a single similarity metric, in \cite{halcrow2020grale} several (potentially weak) measures of similarity are fused.
The quality of the predictions of these methods depends heavily on the choice of the similarity metric(s).

\textbf{Fully connected graph:}
Another approach is to start with a fully connected graph and assign edge weights using the available meta-data or employ the GNN variants that provide weights for each edge via an attention mechanism \citep{velivckovic2018graph,zhang2018gaan}. This approach has been used in computer vision \citep[e.g.,][]{suhail2019mixture}, natural language processing \citep[e.g.,][]{zhu2019graph}, and few-shot learning \citep[e.g.,][]{garcia2017few}. The complexity of this approach grows rapidly making it applicable only to small-sized graphs. \citet{zhang2020graph} propose to define local neighborhoods for each node and only assume that these local neighborhoods are fully connected. Their approach relies on an initial graph structure to define the local neighborhoods.

\textbf{Latent graph learning:}
Instead of a similarity graph based on the initial features, one may use a graph generator with learnable parameters. In \cite{li2018adaptive}, a fully connected graph is created based on a bilinear similarity function with learnable parameters. In \cite{franceschi2019learning}, a Bernoulli distribution is learned for each possible edge and graph structures are created through sampling from these distributions. In \cite{yang2019topology}, the input structure is updated to increase homophily based on the labels and model predictions. In \cite{IDGL}, an iterative approach is proposed that iterates over projecting the nodes to a latent space and constructing an adjacency matrix from the latent representations multiple times.
A common approach in this category is to learn a projection of the nodes to a latent space where node similarities correspond to edge weights or edge probabilities. In
\cite{wu2018quest}, the nodes are projected to a latent space by learning weights for each of the input features. In \cite{qasim2019learning,jiang2019semi,cosmo2020latent}, a multi-layer perceptron is used for projection. In \cite{GRCN,zhao2020data}, a GNN is used for projection; it uses the node features and an initial graph structure. In \cite{kazi2020differentiable}, different graph structures are created in different layers by using separate GNN projectors, where the input to the GNN projector in a layer is the projected values and the generated graph structure from the previous layer. 
In our experiments, we compare with several approaches from this category.

\textbf{Leveraging domain knowledge:}
In some applications, one may leverage domain knowledge to guide the model toward learning specific structures. For example, in \cite{johnson2020learning}, abstract syntax trees and regular languages are leveraged in learning graph structures of Python programs that aid reasoning for downstream tasks. In \cite{jin2020graph}, the structure learning is guided for robustness to adversarial attacks through the domain knowledge that clean adjacency matrices are often sparse and low-rank and exhibit feature smoothness along the connected nodes.
Other examples in this category include \cite{jang2019brain,qasim2019learning}. In our paper, we experiment with general-purpose datasets without access to domain knowledge.

\textbf{Proposed method:} Our model falls within the latent graph learning category. 
We supplement the training with a self-supervised objective to increase the amount of supervision in learning a structure. 
Our self-supervised task is inspired by, and similar to, the pre-training strategies for GNNs \citep{hu2020strategies,hu2020gpt,jin2020self,you2020does,zhu2020self} (specifically, we adopt the multi-task learning framework of \citet{you2020does}), but it differs from this line of work as we use self-supervision for learning a graph structure whereas the above methods use it to learn better (and, in some cases, transferable) GNN parameters. 

\section{Background and notation} \label{sec:background}
We use lowercase letters to denote scalars, bold lowercase letters to denote vectors and bold uppercase letters to denote matrices. $\mI$ represents an identity matrix. For a vector $\vv$, we represent its $\Th{i}$ element as $\vv_i$. For a matrix $\mM$, we represent the $\Th{i}$ row as $\mM_i$ and the element at the $\Th{i}$ row and $\Th{j}$ column as $\mM_{ij}$. For an attributed graph, we use $n$, $m$ and $f$ to represent the number of nodes, edges, and features respectively, and denote the graph as $\graph{G}=\{ \vertices{V}, \mA, \mX\}$ where $\vertices{V}=\{v_1, \dots, v_n\}$ is a set of nodes, $\mA\in\mathbb{R}^{n\times n}$ is an adjacency matrix with $\mA_{ij}$ indicating the weight of the edge from $v_i$ to $v_j$ ($\mA_{ij}=0$ implies no edge), and $\mX\in\mathbb{R}^{n\times f}$ is a matrix whose rows correspond to node features. 

Graph convolutional networks (GCNs) \cite{kipf2017semi} are a powerful variant of GNNs. For a graph $\graph{G}=\{ \vertices{V}, \mA, \mX\}$ with a degree matrix $\mD$, layer $l$ of the GCN architecture can be defined as $\mH^{(l)} = \sigma(\hat{\mA}\mH^{(l-1)}\mW^{(l)})$ where $\hat{\mA}$ represents a normalized adjacency matrix, $\mH^{(l-1)}\in\mathbb{R}^{n\times d_{l-1}}$ represents the node representations in layer \emph{l-1} ($\mH^{(0)}=\mX$), $\mW^{(l)}\in\mathbb{R}^{d_{l-1}\times d_l}$ is a weight matrix, $\sigma$ is an activation function such as ReLU~\cite{relu}, and $\mH^{(l)}\in\mathbb{R}^{n\times d_l}$ is the updated node embeddings. For undirected graphs where the adjacency is symmetric,   $\hat{\mA}=\mD^{-\frac{1}{2}}(\mA+\mI)\mD^{-\frac{1}{2}}$ corresponds to a row-and-column normalized adjacency with self-loops, and for directed graphs where the adjacency is not necessarily symmetric, $\hat{\mA}=\mD^{-1}(\mA+\mI)$ corresponds to a row normalized adjacency matrix with self-loops. Here, $\mD$ is a (diagonal) degree matrix for $(\mA+\mI)$ defined as $\mD_{ii}=1+\sum_j \mA_{ij}$.

\section{Proposed method: SLAPS}
SLAPS consists of four components: 1) generator, 2) adjacency processor, 3) classifier, and 4) self-supervision. 
Figure~\ref{fig:slaps} illustrates these components. In the next three subsections, we explain the first three components. Then, we point out a supervision starvation problem for a model based only on these components. Then we describe the self-supervision component as a solution to the supervision starvation problem and the full SLAPS model.

\subsection{Generator} \label{sec:generator}
The generator is a function $\func{G}: \mathbb{R}^{n\times f} \rightarrow \mathbb{R}^{n\times n}$ with parameters $\bm{\theta}_\func{G}$ which takes the node features $\mX \in\mathbb{R}^{n\times f}$ as input and produces a 
matrix $\tilde{\mA}\in\mathbb{R}^{n\times n}$ as output. We consider the following two generators and leave experimenting with more sophisticated graph generators (e.g., \citep{you2018graphrnn,liu2019graph,liao2019efficient}) and models with tractable adjacency computations (e.g., \citep{choromanski2020rethinking}) as future work.

\begin{figure*}[t]
   \centering
   \includegraphics[width=0.95\textwidth]{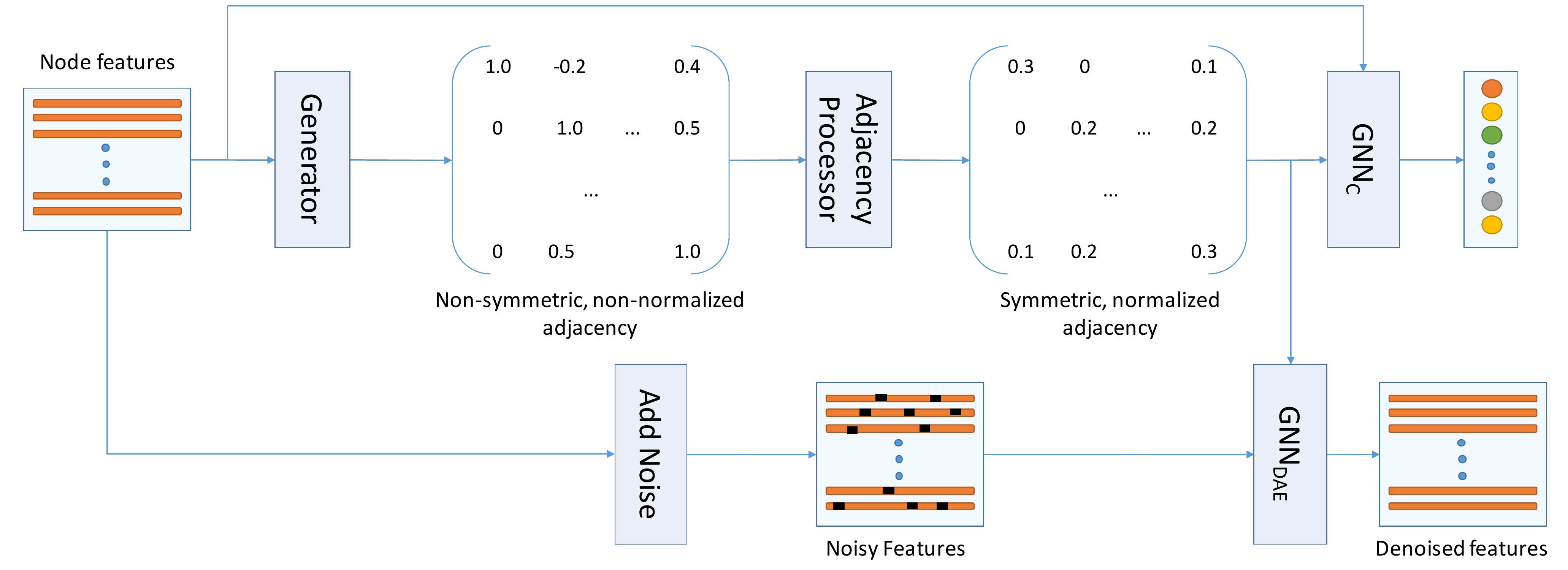}
   \caption{%
   \label{fig:slaps} %
    Overview of SLAPS. At the top, a generator receives the node features and produces a non-symmetric, non-normalized adjacency having (possibly) both positive and negative values (Section~\ref{sec:generator}). The adjacency processor makes the values positive, symmetrizes and normalizes the adjacency (Section~\ref{sec:processor}). The resulting adjacency and the node features go into $\func{GNN_C}$ which predicts the node classes (Section~\ref{sec:classifier}). At the bottom, some noise is added to the node features. The resulting noisy features and the generated adjacency go into $\func{GNN_{DAE}}$ which then denoises the features (Section~\ref{sec:self-supervision}).}
\end{figure*}

\textbf{Full parameterization (FP):} For this generator, $\bm{\theta}_\func{G}\in\mathbb{R}^{n\times n}$ and the generator function is defined as $\tilde{\mA}=\func{G}_{FP}(\mX; \bm{\theta}_\func{G})=\bm{\theta}_\func{G}$. That is, the generator ignores the input node features and directly optimizes the adjacency matrix. 
FP is similar to the generator in \cite{franceschi2019learning} except that they treat each element of $\tilde{\mA}$ as the parameter of a Bernoulli distribution and sample graph structures from these distributions. FP is simple and flexible for learning any adjacency matrix but adds $n^2$ parameters which limits scalability and makes the model susceptible to overfitting.

\textbf{MLP-kNN:} Here, $\bm{\theta}_\func{G}$ corresponds to the weights of a multi-layer perceptron (MLP) and $\tilde{\mA}=\func{G_{MLP}}(\mX; \bm{\theta}_\func{G})=\func{kNN}(\func{MLP}(\mX))$, where $\func{MLP}:\mathbb{R}^{n\times f}\rightarrow \mathbb{R}^{n\times f'}$ is an MLP that produces a matrix with updated node representations $\mX'$; $\func{kNN}:\mathbb{R}^{n\times f'}\rightarrow \mathbb{R}^{n\times n}$ produces a sparse matrix. The implementation details for the kNN operation is provided in the supplementary material. 

\textbf{Initialization and variants of MLP-kNN:}
Let $\mA^{kNN}$ represent an adjacency matrix created by applying a $\func{kNN}$ function on the initial node features. One smart initialization for $\bm{\theta}_\func{G}$ is to initialize it in a way that the generator initially generates $\mA^{kNN}$ (i.e. $\tilde{\mA}=\mA^{kNN}$ before training starts). This can be trivially done for the FP generator by initializing $\bm{\theta}_\func{G}$ to $\mA^{kNN}$. 
For MLP-kNN, we consider two variants. In one, hereafter referred to simply as MLP, we keep the input dimension the same throughout the layers. In the other, hereafter referred to as MLP-D, we consider MLPs with diagonal weight matrices (i.e., except the main diagonal, all other parameters in the weight matrices are zero). For both variants, we initialize the weight matrices in $\bm{\theta}_\func{G}$ with the identity matrix to ensure that the output of the MLP is initially the same as its input and the kNN graph created on these outputs is equivalent to $\mA^{kNN}$ (Alternatively, one may use other MLP variants but pre-train the weights to output $\mA^{kNN}$ before the main training starts.). MLP-D can be thought of as assigning different weights to different features and then computing node similarities.

\subsection{Adjacency processor} \label{sec:processor}
The output $\tilde{\mA}$ of the generator may have both positive and negative values, may be non-symmetric and non-normalized. We let $\mA = \frac{1}{2}\mD^{-\frac{1}{2}}(\func{P}(\tilde{\mA})+\func{P}(\tilde{\mA})^T)\mD^{-\frac{1}{2}}$.
Here $\func{P}$ is a function with a non-negative range applied element-wise on its input -- see supplementary material for details. The sub-expression $\frac{1}{2}(\func{P}(\tilde{\mA})+\func{P}(\tilde{\mA})^T)$ makes the resulting matrix $\func{P}(\tilde{\mA})$ symmetric. To understand the reason for taking the mean of $\func{P}(\tilde{\mA})$ and $\func{P}(\tilde{\mA})^T$, assume $\tilde{\mA}$ is generated by $\func{G_{MLP}}$. If $v_j$ is among the $k$ most similar nodes to $v_i$ and vice versa, then the strength of the connection between $v_i$ and $v_j$ will remain the same. However, if, say, $v_j$ is among the $k$ most similar nodes to $v_i$ but $v_i$ is not among the top k for $v_j$, then taking the average of the similarities reduces the strength of the connection between $v_i$ and $v_j$. Finally, once we have a symmetric adjacency with non-negative values, we normalize $\frac{1}{2}(\func{P}(\tilde{\mA})+\func{P}(\tilde{\mA})^T)$ by computing its degree matrix $\mD$ and multiplying it from left and right to $\mD^{-\frac{1}{2}}$.

\subsection{Classifier} \label{sec:classifier}
The classifier is a function $\func{GNN_C}:\mathbb{R}^{n\times f}\times\mathbb{R}^{n\times n}\rightarrow \mathbb{R}^{n\times |\classes{C}|}$ with parameters $\bm{\theta}_\func{GNN_C}$. It takes the node features $\mX$ and the generated adjacency $\mA$ as input and provides for each node the logits for each class. $\classes{C}$ corresponds to the classes and $|\classes{C}|$ corresponds to the number of classes. We use a two-layer GCN for which $\bm{\theta}_\func{GNN_C}=\{\mW^{(1)}, \mW^{(2)}\}$ and define our classifier as $\func{GNN_C}(\mA, \mX; \bm{\theta}_\func{GNN_C})=\mA\func{ReLU}(\mA\mX\mW^{(1)})\mW^{(2)}$ but other GNN variants can be used as well (recall that $\mA$ is normalized). The training loss $\loss{L}_C$ for the classification task is computed by taking the softmax of the logits to produce a probability distribution for each node and then computing the cross-entropy loss.

\subsection{Using only the first three components leads to supervision starvation} \label{sec:3comp}
One may create a model using only the three components described so far corresponding to the top part of Figure~\ref{fig:slaps}. As we will explain here, however, this model may suffer severely from supervision starvation. The same problem also applies to many existing approaches for latent graph learning, as they can be formulated as a combination of variants of these three components.

Consider a scenario during training where two unlabeled nodes $v_i$ and $v_j$ are not directly connected to any labeled nodes according to the generated structure. Then, since a two-layer GCN makes predictions for the nodes based on their two-hop neighbors, the classification loss 
\begin{wrapfigure}{r}{0.33\columnwidth}
    \centering
    \includegraphics[width=0.33\columnwidth]{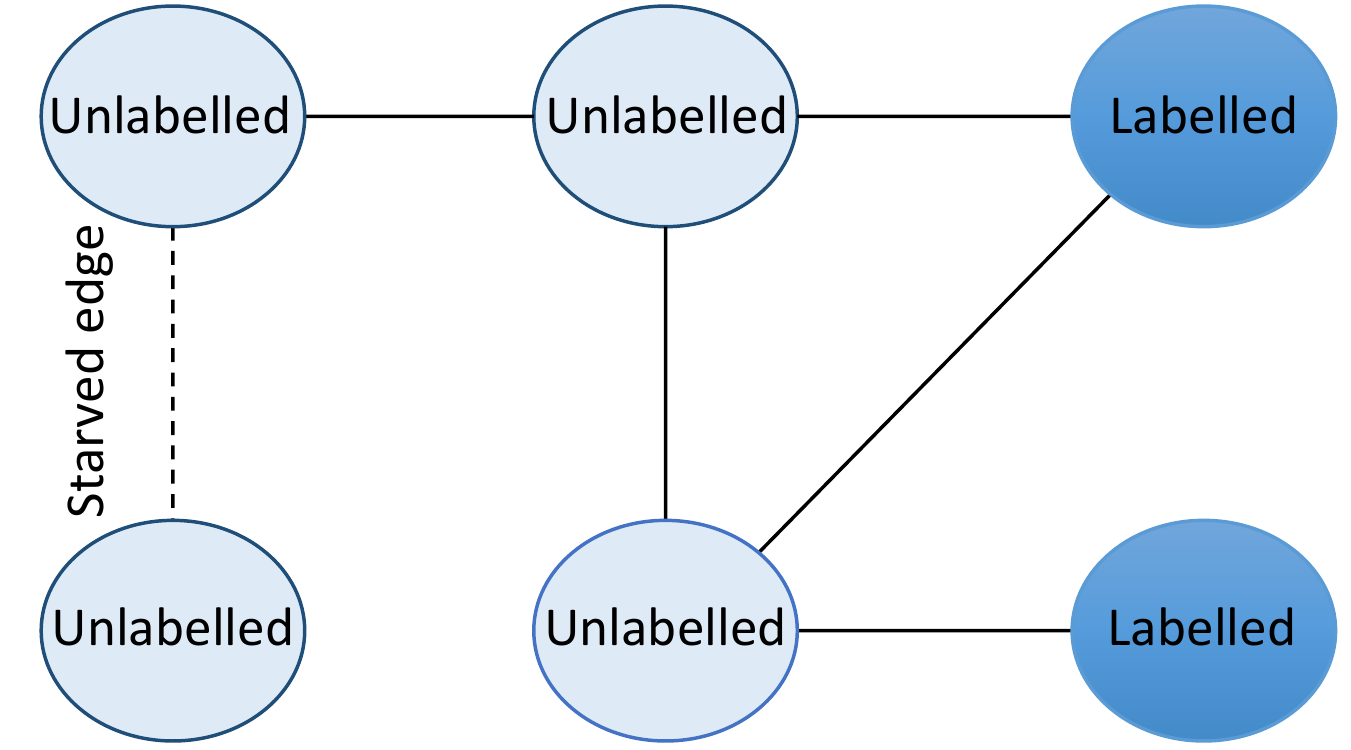}
    \caption{Using a two-layer GCN, the predictions made for the labeled nodes are not affected by the dashed (starved) edge.
    \label{fig:supervision}}
\end{wrapfigure}
(i.e. $\loss{L}_C$) is not affected by the edge between $v_i$ and $v_j$ and this edge receives no supervision\footnote{While using more layers may somewhat alleviate this problem, deeper GCNs typically produce inferior results, e.g., due to oversmoothing \citep[][]{li2018deeper,oono2020graph} -- see the supplementary material for empirical evidence.}.
Figure~\ref{fig:supervision} provides an example of such a scenario.
Let us call the edges that do not affect the loss function $\loss{L}_C$ (and consequently do not receive supervision) as \emph{starved edges}. These edges are problematic because although they may not affect the training loss, the predictions at the test time depend on these edges and if their values are learned without enough supervision, the model may make poor predictions at the test time. 
A natural question concerning the extent of the problem caused by such edges is the proportion of starved edges. The following theorem formally establishes the extent of the problem for Erd\H{o}s-R\'enyi graphs~\cite{erdos1959}; in the supplementary, we extend this result to the Barabási–Albert model \cite{albert2002statistical} and scale-free networks \cite{barabasi1999emergence}. An \emph{Erd\H{o}s-R\'enyi} graph with $n$ nodes and $m$ edges is a graph chosen uniformly at random from the collection of all graphs which have $n$ nodes and $m$ edges.

\begin{theorem} \label{thrm}
Let $\graph{G}(n, m)$ be an Erd\H{o}s-R\'enyi graph with $n$ nodes and $m$ edges. Assume we have labels for $q$ nodes selected uniformly at random. 
The probability of an edge being a starved edge with a two-layer GCN is equal to $(1 - \frac{q}{n})(1 - \frac{q}{n-1})\prod_{i=1}^{2q}(1 - \frac{m-1}{{n \choose 2}-i})$.
\end{theorem}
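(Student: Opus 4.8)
The plan is to compute, for a single fixed edge $e = \{v_i, v_j\}$ of the Erdős–Rényi graph $\graph{G}(n,m)$, the probability that $e$ is starved, i.e.\ that neither $v_i$ nor $v_j$ lies within distance one of any labeled node. Because the graph is chosen uniformly among all $n$-vertex $m$-edge graphs and the label set is an independent uniform $q$-subset of the vertices, I would condition on the positions of the labeled nodes and then analyze the random edge set. The key observation is that $e$ fails to be starved exactly when at least one endpoint of $e$ is labeled, or at least one endpoint of $e$ is adjacent (in $\graph{G}$) to a labeled node. So the target probability factors, conceptually, as: (probability neither endpoint of $e$ is itself labeled) times (conditional probability that, given neither endpoint is labeled, no edge of $\graph{G}$ joins $\{v_i,v_j\}$ to the labeled set).

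First I would handle the ``endpoints unlabeled'' part: since the $q$ labels are a uniform $q$-subset of the $n$ vertices, the chance $v_i$ is unlabeled is $1-\frac{q}{n}$, and given that, the chance $v_j$ is unlabeled is $1-\frac{q}{n-1}$; this produces the leading factor $(1-\frac{q}{n})(1-\frac{q}{n-1})$. Next, conditioned on the label set $Q$ (with $v_i,v_j\notin Q$) and on the presence of edge $e$, the remaining $m-1$ edges are a uniform random $(m-1)$-subset of the $\binom{n}{2}-1$ non-$e$ vertex pairs. The edge $e$ is starved iff none of those $m-1$ edges is one of the ``dangerous'' pairs, namely a pair of the form $\{v_i, w\}$ or $\{v_j,w\}$ with $w\in Q$. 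There are exactly $2q$ such dangerous pairs (the sets $\{\{v_i,w\}: w\in Q\}$ and $\{\{v_j,w\}:w\in Q\}$ are disjoint and each has size $q$, and none of them equals $e$ since $v_i,v_j\notin Q$). So I need the probability that a uniform $(m-1)$-subset of a ground set of size $\binom{n}{2}-1$ avoids a fixed set of $2q$ elements. This hypergeometric avoidance probability is $\prod_{i=1}^{2q}\frac{\binom{n}{2}-1-(m-1)-(i-1)}{\binom{n}{2}-1-(i-1)}$; after reindexing and simplifying, each factor equals $1-\frac{m-1}{\binom{n}{2}-i}$, giving $\prod_{i=1}^{2q}\bigl(1-\frac{m-1}{\binom{n}{2}-i}\bigr)$. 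Multiplying by the leading factor yields the claimed formula. I would also note that this conditional avoidance probability does not depend on which particular unlabeled endpoints or label set we fixed, which justifies removing the conditioning.

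The main obstacle, and the place to be careful, is the combinatorial bookkeeping in the ``dangerous pairs'' step: I must verify that the $2q$ forbidden pairs are genuinely distinct (no double counting between the $v_i$-side and $v_j$-side, which uses $v_i\ne v_j$ and $v_i,v_j\notin Q$), that none of them coincides with $e$ itself (so the conditioning on $e$ being present is compatible), and that the hypergeometric ratio telescopes cleanly into the stated product form, i.e.\ that $\binom{n}{2}-1-(m-1)-(i-1)$ over $\binom{n}{2}-1-(i-1)$ really reindexes to $1-\frac{m-1}{\binom{n}{2}-i}$ as $i$ runs from $1$ to $2q$. A secondary subtlety is confirming that the two-hop-neighborhood characterization of ``starved'' is correct for a two-layer GCN with self-loops: a two-layer GCN's output at a labeled node depends on that node's two-hop neighborhood, so $e=\{v_i,v_j\}$ influences the loss iff $v_i$ or $v_j$ is within two hops of some labeled node along a path using $e$ — which, since $e$ itself is one hop, reduces exactly to ``$v_i$ or $v_j$ is labeled or adjacent to a labeled node.'' Once that characterization is pinned down, the rest is the counting argument above.
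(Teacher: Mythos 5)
Your proposal is correct and follows essentially the same route as the paper: factor out the probability $(1-\frac{q}{n})(1-\frac{q}{n-1})$ that both endpoints are unlabeled, then observe that, conditional on the edge being present, the remaining $m-1$ edges are uniform over the $\binom{n}{2}-1$ other pairs and must avoid the $2q$ pairs joining an endpoint to a labeled node. Your hypergeometric-avoidance product is exactly the paper's step-by-step conditioning (disconnecting $v$ and then $u$ from each labeled node in turn), just written in closed form and with the bookkeeping (distinctness of the $2q$ dangerous pairs, the two-hop characterization of a starved edge) made more explicit.
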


We defer the proof to the supplementary material. 
To put the numbers from the theorem in perspective, let us consider three established benchmarks for semi-supervised node classification namely \emph{Cora}, \emph{Citeseer}, and \emph{Pubmed} (the statistics for these datasets can be found in the Appendix). For an Erd\H{o}s-R\'enyi graph with similar statistics as the Cora dataset ($n=2708$, $m=5429$, $q=140$),
the probability of an edge being a starved edge is $59.4\%$ according to the above theorem. For Citeseer and Pubmed, this number is $75.7\%$ and $96.7\%$ respectively.
While Theorem~\ref{thrm} is stated for Erd\H{o}s-R\'enyi graphs, the identified problem also applies to natural graphs. For the original structures of Cora, Citeseer, and Pubmed, for example, $48.8\%$, $65.2\%$, and $91.6\%$ of the edges are starved edges.

\subsection{Self-supervision} \label{sec:self-supervision}
One possible solution to the supervision starvation problem is to define a \emph{prior graph structure} and regularize the learned structure toward it. 
This leads the starved edges toward the prior structure as opposed to neglecting them.
The choice of the prior is important as it determines the inductive bias incorporated into the model.
We define a prior structure based on the following hypothesis:
\begin{hypothesis}
A graph structure that is suitable for predicting the node features is also suitable for predicting the node labels.
\end{hypothesis}
We first explain why the above hypothesis is reasonable for an extreme case that is easy to understand and then extend the explanation to the general case.
Consider an extreme scenario where one of the node features is the same as the node labels. A graph structure that is suitable for predicting this feature exhibits homophily for it. Because of the equivalence between this feature and the labels, the graph structure also exhibits homophily for the labels, so it is also suitable for predicting the labels. In the general (non-extreme) case, there may not be a single feature that is equivalent to the labels but a subset of the features may be highly predictive of the labels. A graph structure that is suitable for predicting this subset exhibits homophily for the features in the subset. Because this subset is highly predictive of the labels, the structure also exhibits a high degree of homophily for the labels, so it is also suitable for predicting the node labels.

Next, we explain how to design a suitable graph structure for predicting the features and how to regularize toward it. One could design such a structure manually (e.g., by handcrafting a graph that connects nodes based on the collective homophily between their individual features) and then penalize the difference between this prior graph and the learned graph. Alternatively, in this paper, we take a learning-based approach based on self-supervision where we not only use the learned graph structure for the classification task, but also for denoising the node features. The self-supervised task encourages the model to learn a structure that is suitable for predicting the node features. We describe this approach below and provide comparisons to the manual approach in the supplementary material.

Our self-supervised task is based on denoising autoencoders \citep{vincent2008extracting}. Let $\func{GNN_{DAE}}:\mathbb{R}^{n\times f}\times \mathbb{R}^{n\times n}\rightarrow \mathbb{R}^{n\times f}$ be a GNN with parameters $\bm{\theta}_{\func{GNN_{DAE}}}$ that takes node features and a generated adjacency as input and provides updated node features with the same dimension as output. We train $\func{GNN_{DAE}}$ such that it receives a noisy version $\tilde{\mX}$ of the features $\mX$ as input and produces the denoised features $\mX$ as output. Let \emph{idx} represent the indices corresponding to the elements of $\mX$ to which we have added noise, and $\mX_{idx}$ represent the values at these indices. During training, we minimize:
\begin{equation}\label{eq-loss-dae}
    \loss{L}_{DAE}= \func{L}(\mX_{idx}, \func{\func{GNN_{DAE}}}(\tilde{\mX}, \mA; \bm{\theta}_{\func{GNN_{DAE}}})_{idx})
\end{equation}
where $\mA$ is the generated adjacency matrix and $\func{L}$ is a loss function.
For datasets where the features consist of binary vectors, $idx$ consists of $r$ percent of the indices of $\mX$ whose values are $1$ and $r\eta$ percent of the indices whose values are $0$, both selected uniformly at random in each epoch. Both $r$ and $\eta$ (corresponding to the negative ratio) are hyperparameters. In this case, we add noise by setting the $1$s in the selected mask to $0$s and $\func{L}$ is the binary cross-entropy loss. For datasets where the input features are continuous numbers, $idx$ consists of $r$ percent of the indices of $\mX$ selected uniformly at random in each epoch. We add noise by either replacing the values at $idx$ with $0$ or by adding independent Gaussian noises to each of the features. In this case, $\func{L}$ is the mean-squared error loss.

Note that the self-supervised task in \eqref{eq-loss-dae} is generic and can be added to different GNNs as well as latent graph learning models. It can be also combined with other techniques in the literature that encourage learning more homophilous structures or increase the amount of supervision. In our experiments, we test the combination of our self-supervised task with two such techniques namely \emph{self-training} \cite{li2018deeper} and \emph{AdaEdge} \cite{chen2020measuring}. Self-training helps the model ``see'' more labeled nodes and AdaEdge helps iteratively create graph structure with higher degrees of homophily. We refer the reader to the supplementary material for descriptions of self-training and AdaEdge.

\subsection{SLAPS} \label{sec:slaps}
Our final model is trained to minimize $\loss{L}=\loss{L}_C + \lambda \loss{L}_{DAE}$ where $\loss{L}_C$ is the classification loss, $\loss{L}_{DAE}$ is the denoising autoencoder loss (see \Eqref{eq-loss-dae}), and $\lambda$ is a hyperparameter controlling the relative importance of the two losses.

\section{Experiments}\label{sec:experiments}

In this section, we report our key results. More empirical comparisons, experimental analyses, and ablation studies are presented in the supplementary material.

\begin{table*}[t]
\small
\caption{Results of SLAPS and the baselines on established node classification benchmarks. $\dagger$ indicates results have been taken from \citet{franceschi2019learning}. $\ddag$ indicates results have been taken from \citet{stretcu2019graph}. Bold and underlined values indicate best and second-best mean performances respectively. \emph{OOM} indicates out of memory. \emph{OOT} indicates out of time (we allowed 24h for each run). \emph{NA} indicates not applicable.}
\setlength{\tabcolsep}{3pt}
\label{tab:results}
\begin{center}
\resizebox{\columnwidth}{!}{%
\begin{tabular}{c|cccccc}
 Model  & Cora & Citeseer & Cora390 & Citeseer370 & Pubmed 
 & ogbn-arxiv \\ \hline
MLP & $56.1 \pm 1.6^\dagger$ & $56.7 \pm 1.7^\dagger$ & $65.8 \pm 0.4$ & $67.1 \pm 0.5$ & $71.4 \pm 0.0$ & $\underline{54.7 \pm 0.1}$\\
MLP-GAM* & $70.7^\ddag$ & $70.3^\ddag$ & $-$ & $-$ & $71.9^\ddag$ & $-$\\
LP & $37.6 \pm 0.0$ & $23.2 \pm 0.0$ & $36.2 \pm 0.0$ & $29.1 \pm 0.0$ & $41.3 \pm 0.0$ & OOM\\
kNN-GCN & $66.5 \pm 0.4^\dagger$ & $68.3 \pm 1.3^\dagger$ & $72.5 \pm 0.5$ & $71.8 \pm 0.8$ & $70.4 \pm 0.4$ & $49.1 \pm 0.3$\\
LDS & $-$ & $-$ & $71.5 \pm 0.8^\dagger$ & $71.5 \pm 1.1^\dagger$ & OOM & OOM\\
GRCN & $67.4 \pm 0.3$ & $67.3 \pm 0.8$ & $71.3 \pm 0.9$ & $70.9 \pm 0.7$ & $67.3 \pm 0.3$ & OOM\\
DGCNN & $56.5 \pm 1.2$ & $55.1 \pm 1.4$& $67.3 \pm 0.7$& $66.6 \pm 0.8$ & $70.1 \pm 1.3$ & OOM\\
IDGL & $70.9 \pm 0.6$ & $68.2 \pm 0.6$ & $73.4 \pm 0.5$ & $72.7 \pm 0.4$ & $72.3 \pm 0.4$ & OOM\\
kNN-GCN + AdaEdge & $67.7 \pm 1.0$ & $68.8 \pm 1.0$ & $72.2 \pm 0.4$ & $71.8 \pm 0.6$ & OOT & OOT \\
kNN-GCN + self-training & $67.3 \pm 0.3$ & $69.8 \pm 1.0$ & $71.1 \pm 0.3$ & $72.4 \pm 0.2$ & $72.7 \pm 0.1$ & NA
\\
\hline
SLAPS (FP) & $72.4 \pm 0.4$& $70.7 \pm 0.4$& $\bm{76.6} \pm \bm{0.4}$ & $73.1\pm 0.6$ & OOM & OOM\\
SLAPS (MLP) & $72.8 \pm 0.8$ & $70.5 \pm 1.1$& $75.3 \pm 1.0$ & $73.0 \pm 0.9$ & $\bm{74.4} \pm \bm{0.6}$ & $\bm{56.6} \pm \bm{0.1}$ \\
SLAPS (MLP-D) & $\underline{73.4 \pm 0.3}$& $\underline{72.6 \pm 0.6}$& $75.1 \pm 0.5$ & $\bm{73.9} \pm \bm{0.4}$ & $73.1 \pm 0.7$ & $52.9 \pm 0.1$\\
\hline
SLAPS (MLP) + AdaEdge & $72.8 \pm 0.7$ & $70.6 \pm 1.5$ & $75.2 \pm 0.6$ & $72.6 \pm 1.4$ & OOT & OOT\\
SLAPS (MLP) + self-training & $\bm{74.2} \pm \bm{0.5}$ & $\bm{73.1} \pm \bm{1.0}$ & $\underline{75.5 \pm 0.7}$ & $\underline{73.3 \pm 0.6}$ & $\underline{74.3 \pm 1.4}$ & NA

\end{tabular}
 }
\end{center}
\end{table*}

\textbf{Baselines:} We compare our proposal to several baselines with different properties. The first baseline is a multi-layer perceptron (MLP) which does not take the graph structure into account. We also compare against MLP-GAM* \citep{stretcu2019graph} which learns a fully connected graph structure and uses this structure to supplement the loss function of the MLP toward predicting similar labels for neighboring nodes. Our third baseline is label propagation (LP)~\citep{zhu2002learning}, a well-known model for semi-supervised learning. 
Similar to \citep{franceschi2019learning}, we also consider a baseline named \emph{kNN-GCN} where we create a kNN graph based on the node feature similarities and feed this graph to a GCN; the graph structure remains fixed in this approach. We also compare with prominent existing latent graph learning models including LDS \citep{franceschi2019learning}, GRCN \citep{GRCN}, DGCNN \citep{DGCNN}, and IDGL \citep{IDGL}. In \citep{IDGL}, another variant named IDGL-ANCH is also proposed that reduces time complexity through anchor-based approximation \cite{anchor-approximation}. We compare against the base IDGL model because it does not sacrifice accuracy for time complexity, and because anchor-based approximation is model-agnostic and could be combined with other models too. We feed a kNN graph to the models requiring an initial graph structure. We also explore how adding self-training and AdaEdge impact the performance of kNN-GCN as well as SLAPS.

\textbf{Datasets:} We use three established benchmarks in the GNN literature namely Cora, Citeseer, and Pubmed \citep{sen2008collective} as well as the \emph{ogbn-arxiv} dataset \citep{hu2020open} that is orders of magnitude larger than the other three datasets and is more challenging due to the more realistic split of the data into train, validation, and test sets. For these datasets, we only feed the node features to the models and not their original graph structure. Following \cite{franceschi2019learning,IDGL}, we also experiment with several classification (non-graph) datasets available in scikit-learn~\citep{pedregosa2011scikit} including Wine, Cancer, Digits, and 20News. Furthermore, following \cite{glcn}, we also provide results on MNIST \cite{lecun2010mnist}. The dataset statistics can be found in the supplementary. For Cora and Citeseer, the LDS model uses the train data for learning the parameters of the classification GCN, half of the validation for learning the parameters of the adjacency matrix (in their bi-level optimization setup, these are considered as hyperparameters), and the other half of the validation set for early stopping and tuning the other hyperparameters. Besides experimenting with the original setups of these two datasets, we also consider a setup that is closer to that of LDS: we use the train set and half of the validation set for training and the other half of validation for early stopping and hyperparameter tuning. We name the modified versions Cora390 and Citeseer370 respectively where the number proceeding the dataset name shows the number of labels from which gradients are computed.
We follow a similar procedure for the scikit-learn datasets.

\textbf{Implementation:} We defer the implementation details and the best hyperparameter settings for our model on all the datasets to the supplementary material. Code and data is available at \href{https://github.com/BorealisAI/SLAPS-GNN}{https://github.com/BorealisAI/SLAPS-GNN}.

\subsection{Comparative results} 
The results of SLAPS and the baselines on our benchmarks are reported in Tables~\ref{tab:results}~and~\ref{tab:results2}. We start by analyzing the results in Table~\ref{tab:results} first. Starting with the baselines, we see that learning a fully connected graph in MLP-GAM* makes it outperform MLP.
kNN-GCN significantly outperforms MLP on Cora and Citeseer but underperforms on Pubmed and ogbn-arxiv. Furthermore, both self-training and AdaEdge improve the performance of kNN-GCN. This shows the importance of the similarity metric and the graph structure that is fed into GCN; a low-quality structure can harm model performance. LDS outperforms MLP but the fully parameterized adjacency matrix of LDS results in memory issues for Pubmed and ogbn-arxiv. As for GRCN, it was shown in the original paper that GRCN can revise a good initial adjacency matrix and provide a substantial boost in performance. However, as evidenced by the results, if the initial graph structure is somewhat poor, GRCN's performance becomes on par with kNN-GCN. IDGL is the best performing baseline. 

In addition to the aforementioned baselines, we also experimented with GCN, GAT, and Transformer (encoder only) architectures applied on fully connected graphs. GCN always learned to predict the majority class. This is because after one fully connected GCN layer, all nodes will have the same embedding and become indistinguishable. GAT also showed similar behavior. We believe this is because the attention weights are (almost) random at the beginning (due to random initialization of the model parameters) resulting in nodes becoming indistinguishable and GAT cannot escape from that state. The skip connections of Transformer helped avoid the problem observed for GCN and GAT and we were able to achieve better results ($\sim40\%$ accuracy on Cora). However, we observed severe overfitting (even with very small models and with high dropout probabilities).

SLAPS consistently outperforms the baselines in some cases by large margins. Among the generators, the winner is dataset-dependent with MLP-D mostly outperforming MLP on datasets with many features and MLP outperforming on datasets with small numbers of features. Using the software that was publicly released by the authors, the baselines that learn a graph structure fail on ogbn-arxiv\footnote{We note that IDGL-ANCH also scales to ogbn-arxiv.}; our implementation, on the other hand, scales to such large graphs. Adding self-training helps further improve the results of SLAPS. Adding AdaEdge, however, does not seem effective, probably because the graph structure learned by SLAPS already exhibits a high degree of homophily (see Section~\ref{sec:analysis-learned-adj}). 

\begin{table*}[t]
\scriptsize
\caption{Results on classification datasets. $\dagger$ indicates results have been taken from \citet{franceschi2019learning}. Bold and underlined values indicate best and second-best mean performances respectively.}
\label{tab:results2}
\begin{center}
\begin{tabular}{c|cccc}
 Model & Wine & Cancer & Digits & 20news \\ \hline
MLP & $96.1 \pm 1.0$ & $95.3 \pm 0.9$ & $81.9 \pm 1.0$ & $30.4 \pm 0.1$ \\
kNN-GCN & $93.5 \pm 0.7$ & $95.3 \pm 0.4$ & $\bm{95.4} \pm \bm{0.4}$ & $46.3 \pm 0.3$ \\
LDS & $\bm{97.3} \pm \bm{0.4}^\dagger$ & $94.4 \pm 1.9^\dagger$ & $92.5 \pm 0.7^\dagger$ & $46.4 \pm 1.6^\dagger$ \\
IDGL & $\underline{97.0 \pm 0.7}$ & $94.2 \pm 2.3$ & $92.5 \pm 1.3$ & $48.5 \pm 0.6$ \\
\hline
SLAPS (FP) & $96.6 \pm 0.4$ & $94.6 \pm 0.3$& $\underline{94.4 \pm 0.7}$ & $44.4 \pm 0.8$ \\
SLAPS (MLP) & $96.3 \pm 1.0$& $\underline{96.0 \pm 0.8}$& $92.5\pm0.7$ & $\bm{50.4} \pm \bm{0.7}$ \\
SLAPS (MLP-D) & $96.5 \pm 0.8$& $\bm{96.6} \pm \bm{0.2}$& $94.2 \pm 0.1$ & $\underline{49.8 \pm 0.9}$
\end{tabular}
\end{center}
\end{table*}

In Table~\ref{tab:results2}, we only compared SLAPS with the best performing baselines from Table~\ref{tab:results} (kNN-GCN, LDS and IDGL). We also included an MLP baseline for comparison. On three out of four datasets, SLAPS outperforms the LDS and IDGL baselines. For the Digits dataset, interestingly kNN-GCN outperforms the learning-based models. This could be because the initial kNN structure for this dataset is already a good structure. Among the datasets on which we can train SLAPS with the FP generator, 20news has the largest number of nodes (9,607 nodes). On this dataset, we observed that an FP generator suffers from overfitting and produces weaker results compared to other generators due to its large number of parameters.

\citet{jiang2019semi} show that learning a latent 
graph structure of the input examples can help with
\begin{wraptable}{r}{0.53\columnwidth}
\small
\caption{Results on the MNIST dataset. Bold values indicate best mean performances. Underlined values indicate second best mean performance. All the results for baseline have been taken from \cite{glcn}.}
\label{tab:mnist}
\setlength{\tabcolsep}{3pt}
\begin{tabular}{c|ccc}
 Model & MNIST1000 & MNIST2000 & MNIST3000\\ \hline
ManiReg & $92.74 \pm 0.3$ & $93.96 \pm 0.2$ & $94.62 \pm 0.2$\\
LP & $79.28 \pm 0.9$ & $81.91 \pm 0.8$ & $83.45 \pm 0.5$ \\
DeepWalk & $\underline{94.55 \pm 0.3}$ & $95.04 \pm 0.3$ & $95.34 \pm 0.3$ \\
GCN & $90.59 \pm 0.3$ & $90.91 \pm 0.2$ & $91.01\pm 0.2$ \\
GAT & $92.11 \pm 0.4$ & $92.64 \pm 0.3$ & $92.81 \pm 0.3$\\
GLCN & $94.28 \pm 0.3$ & $\underline{95.09 \pm 0.2}$ & $\underline{95.46 \pm 0.2}$\\
\hline
SLAPS & $\bm{94.66} \pm \bm{0.2}$ & $\bm{95.35} \pm \bm{0.1}$ & $\bm{95.54} \pm \bm{0.0}$\\
\end{tabular}
\end{wraptable}
semi-supervised image classification. 
In particular, they create three versions of the MNIST dataset each consisting of a randomly selected subset with 10,000 examples in total. 
The first version contains 1000 labels for training, the second contains 2000, and the third version contains 3000 labels for training. 
All three variants use an extra 1000 labels for validation. The other examples are used as test examples. 
Here, we conduct an experiment to measure the performance of SLAPS on these variants of the MNIST dataset. We compare against GLCN \cite{jiang2019semi} as well as the baselines in the GLCN paper including manifold regularization \cite{belkin2006manifold}, label propagation, deep walk \cite{perozzi2014deepwalk}, graph convolutional networks (GCN), and graph attention networks (GAT). 

The results are reported in Table~\ref{tab:mnist}. From the results, it can be viewed that SLAPS outperforms GLCN and all the other baselines on the 3 variants. Compared to GLCN, on the three variants SLAPS reduces the error by $7\%$, $5\%$, and $2\%$ respectively, showing that SLAPS can be more effective when the labeled set is small and providing more empirical evidence for Theorem 1.

\subsection{The effectiveness of self-supervision}

\begin{wrapfigure}{r}{0.35\columnwidth}
    \centering
    \includegraphics[width=0.35\columnwidth]{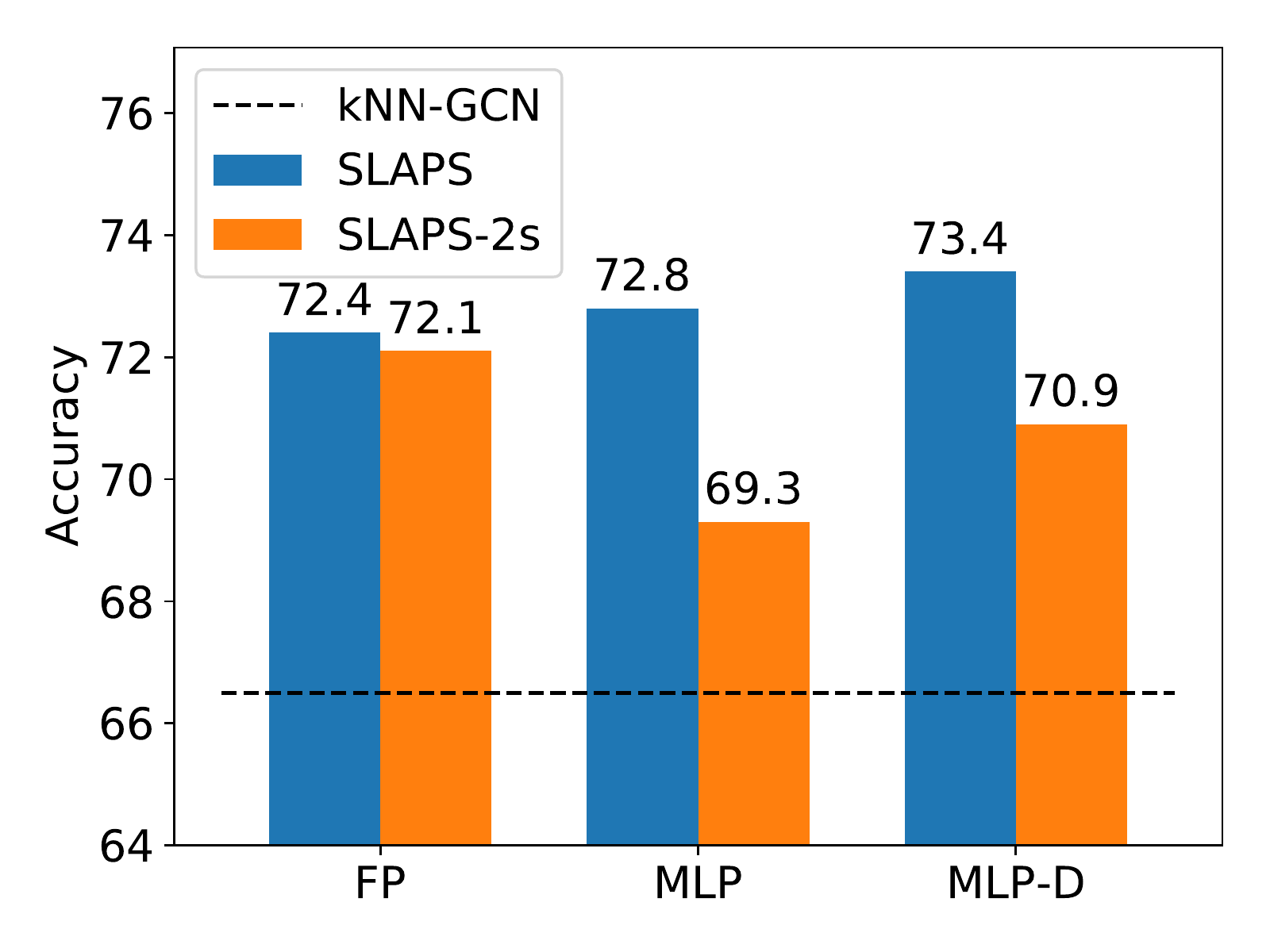}
    \caption{SLAPS vs SLAPS$_{2s}$ on Cora with different generators.}
    \label{fig:slaps_2s}
\end{wrapfigure}

\textbf{Learning a structure only using self-supervision:}
To provide more insight into the value provided by the self-supervision task and the generalizability of the adjacency learned through this task, we conduct experiments with a variant of SLAPS named $SLAPS_{2s}$ that is trained in two stages. We first train the \func{GNN_{DAE}} model by minimizing $\loss{L}_{DAE}$ described in in~\Eqref{eq-loss-dae}. Recall that $\loss{L}_{DAE}$ depends on the parameters $\bm{\theta}_\func{G}$ of the generator and the parameters $\bm{\theta}_{\func{GNN_{DAE}}}$ of the denoising autoencoder. After every $t$ epochs of training, we fix the adjacency matrix, train a classifier with the fixed adjacency matrix, and measure classification accuracy on the validation set. We select the epoch that produces the adjacency providing the best validation accuracy for the classifier. Note that in $SLAPS_{2s}$, the adjacency matrix only receives gradients from the self-supervised task in \Eqref{eq-loss-dae}.

Figure~\ref{fig:slaps_2s} shows the performance of SLAPS and SLAPS$_{2s}$ on Cora and compares them with kNN-GCN. Although SLAPS$_{2s}$ does not use the node labels in learning an adjacency matrix, it outperforms kNN-GCN ($8.4\%$ improvement when using an FP generator). With an FP generator, SLAPS$_{2s}$ even achieves competitive performance with SLAPS; this is mainly because FP does not leverage the supervision provided by $\func{GCN_C}$ toward learning generalizable patterns that can be used for nodes other than those in the training set. These results corroborate the effectiveness of the self-supervision task for learning an adjacency matrix. Besides, the results show that learning the adjacency using both self-supervision and the task-specific node labels results in higher predictive accuracy.

\begin{wrapfigure}{r}{0.35\columnwidth}
    \centering
    \includegraphics[width=0.35\columnwidth]{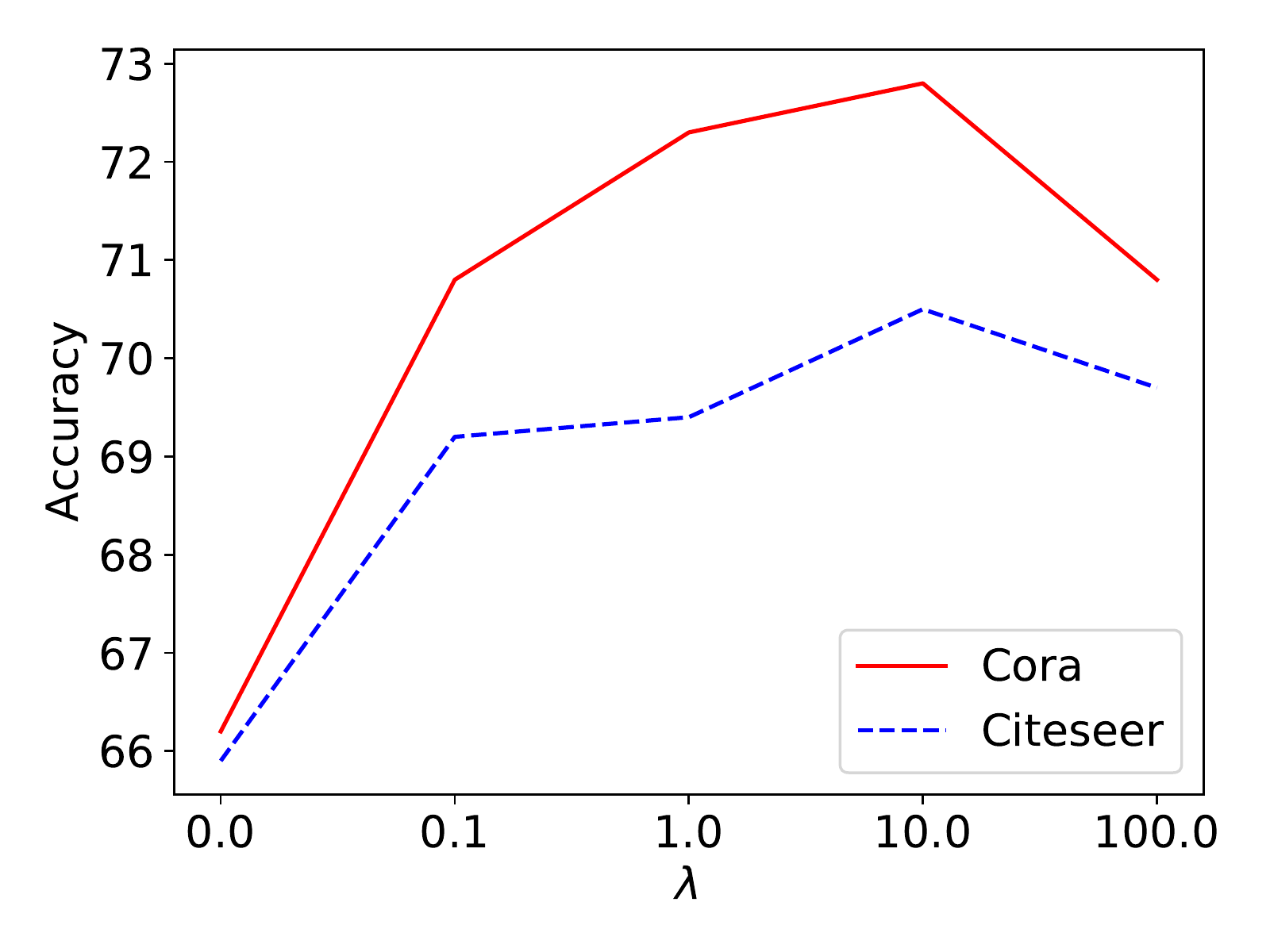}
    \caption{The performance of SLAPS with MLP graph generator as a function of $\lambda$.}
    \label{fig:lambda}
\end{wrapfigure}

\textbf{The value of $\bm{\lambda}$:} Figure~\ref{fig:lambda} shows the performance of SLAPS\footnote{The generator used in this experiment is MLP; other generators produced similar results.} on Cora and Citeseer with different values of $\lambda$. When $\lambda=0$, corresponding to removing self-supervision, the model performance is somewhat poor. As soon as $\lambda$ becomes positive, both models see a large boost in performance showing that self-supervision is crucial to the high performance of SLAPS. Increasing $\lambda$ further provides larger boosts until it becomes so large that the self-supervision loss dominates the classification loss and the performance deteriorates. Note that with $\lambda=0$, SLAPS with the MLP generator becomes a variant of the model proposed in \cite{cosmo2020latent}, but with a different similarity function.

\textbf{Is self-supervision actually solving the supervision starvation problem?} In Fig~\ref{fig:lambda}, we showed that self-supervision is key to the high performance of SLAPS. Here, we examine if this is because self-supervision indeed addresses the supervision starvation problem. For this purpose, we compared SLAPS with and without self-supervision on two groups of test nodes on Cora: 1) those that are not connected to any labeled nodes after training, and 2) those that are connected to at least one labeled node after training. The nodes in group one have a high chance of having starved edges. We observed that adding self-supervision provides $38.0\%$ improvement for the first group and only $8.9\%$ improvement for the latter. Since self-supervision mainly helps with nodes in group 1, this provides evidence that self-supervision is an effective solution to the supervision starvation problem.

\textbf{The effect of the training set size:} According to Theorem~\ref{thrm}, a smaller $q$ (corresponding to the training set size) results in more starved edges in each epoch. To explore the effect of self-supervision as a function of $q$, we compared SLAPS with and without supervision on Cora and Citeseer while reducing the number of labeled nodes per class from 20 to 5. We used the FP generator for this experiment. With 5 labeled nodes per class, adding self-supervision provides  $16.7\%$ and $22.0\%$ improvements on Cora and Citeseer respectively, which is substantially higher than the corresponding numbers when using 20 labeled nodes per class ($10.0\%$ and $7.0\%$ respectively). 
This provides empirical evidence for Theorem~\ref{thrm}. Note that the results on Cora390 and Citeseer 370 datasets provide evidence that the self-supervised task is effective even when the label rate is high.

\subsection{Experiments with noisy graphs} \label{sec:noisy}
\begin{wrapfigure}{r}{0.35\columnwidth}
    \centering
    \includegraphics[width=0.35\columnwidth]{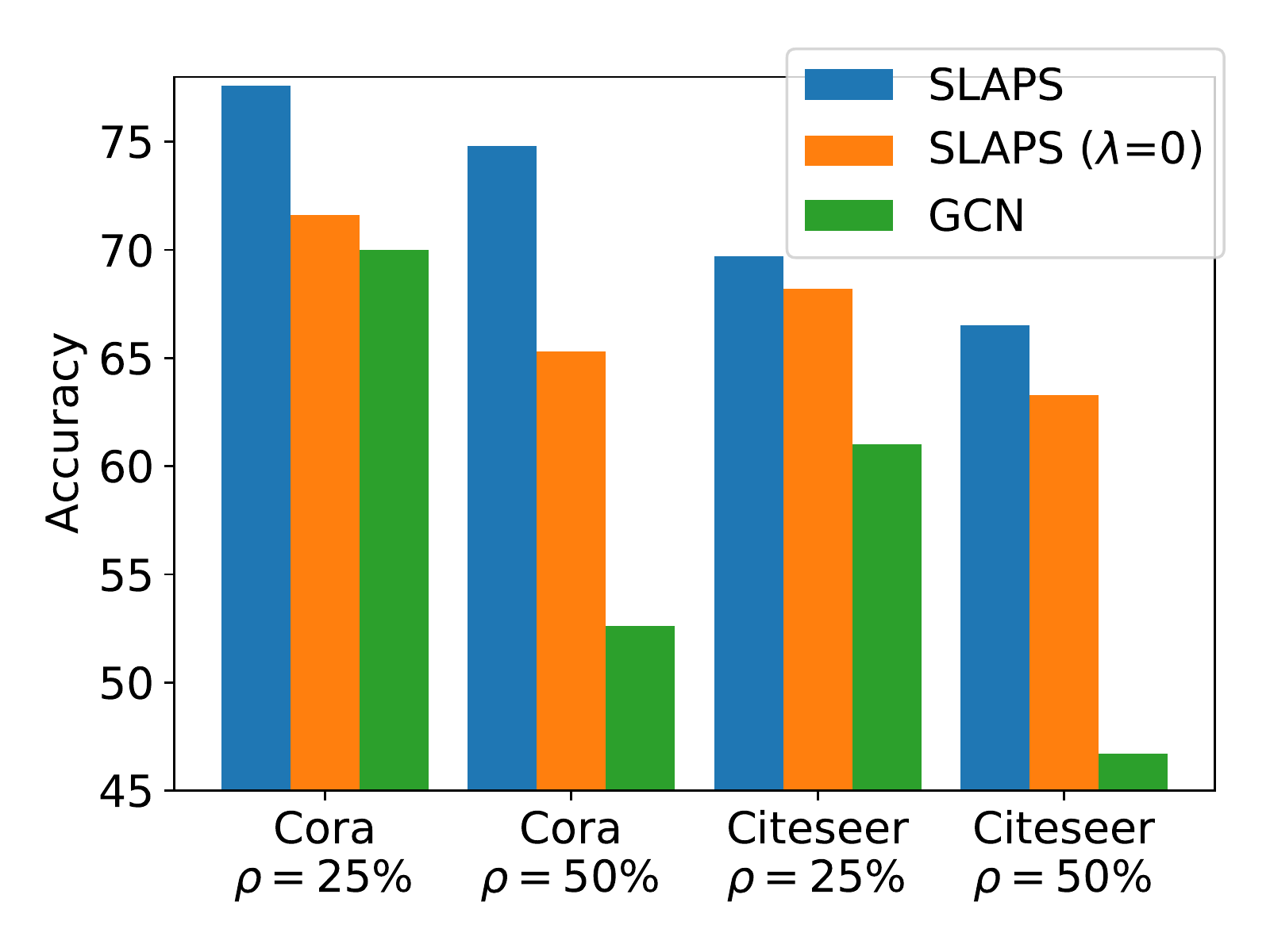}
    \caption{Performance comparison when noisy graphs are provided as input ($\rho$ indicates the percentage of perturbations).}
    \label{fig:noise}
\end{wrapfigure}
The performance of GNNs highly depends on the quality of the input graph structure and deteriorates when the graph structure is noisy \citep[see][]{zugner2018adversarial,dai2018adversarial,fox2019robust}. Here, we verify whether self-supervision is also helpful when a noisy structure is provided as input. Toward this goal, we experiment with Cora and Citeseer and provide noisy versions of the input graph as input. The provided noisy graph structure is used only for initialization; it is then further optimized by SLAPS. We perturb the graph structure by replacing $\rho$ percent of the edges in the original structure (selected uniformly at random) with random edges. Figure~\ref{fig:noise} shows the performance of SLAPS with and without self-supervision ($\lambda=0$ corresponds to no supervision). We also report the results of vanilla GCN on these perturbed graphs for comparison. It can be viewed that self-supervision consistently provides a boost in performance especially for higher values of $\rho$. 

\subsection{Analyses of the learned adjacency}
\label{sec:analysis-learned-adj}
\textbf{Noisy graphs:} 
Following the experiment in Section~\ref{sec:noisy}, we compared the learned and original structures by measuring the number of random edges added during perturbation but removed by the model and the number of edges removed during the perturbation but recovered by the model.
For Cora, SLAPS removed $76.2\%$ and $70.4\%$ of the noisy edges and recovered $58.3\%$ and $44.5\%$ of the removed edges for $\rho=25\%$ and $\rho=50\%$ respectively 
\begin{wrapfigure}{r}{0.35\columnwidth}
    \centering
    \includegraphics[width=0.35\columnwidth]{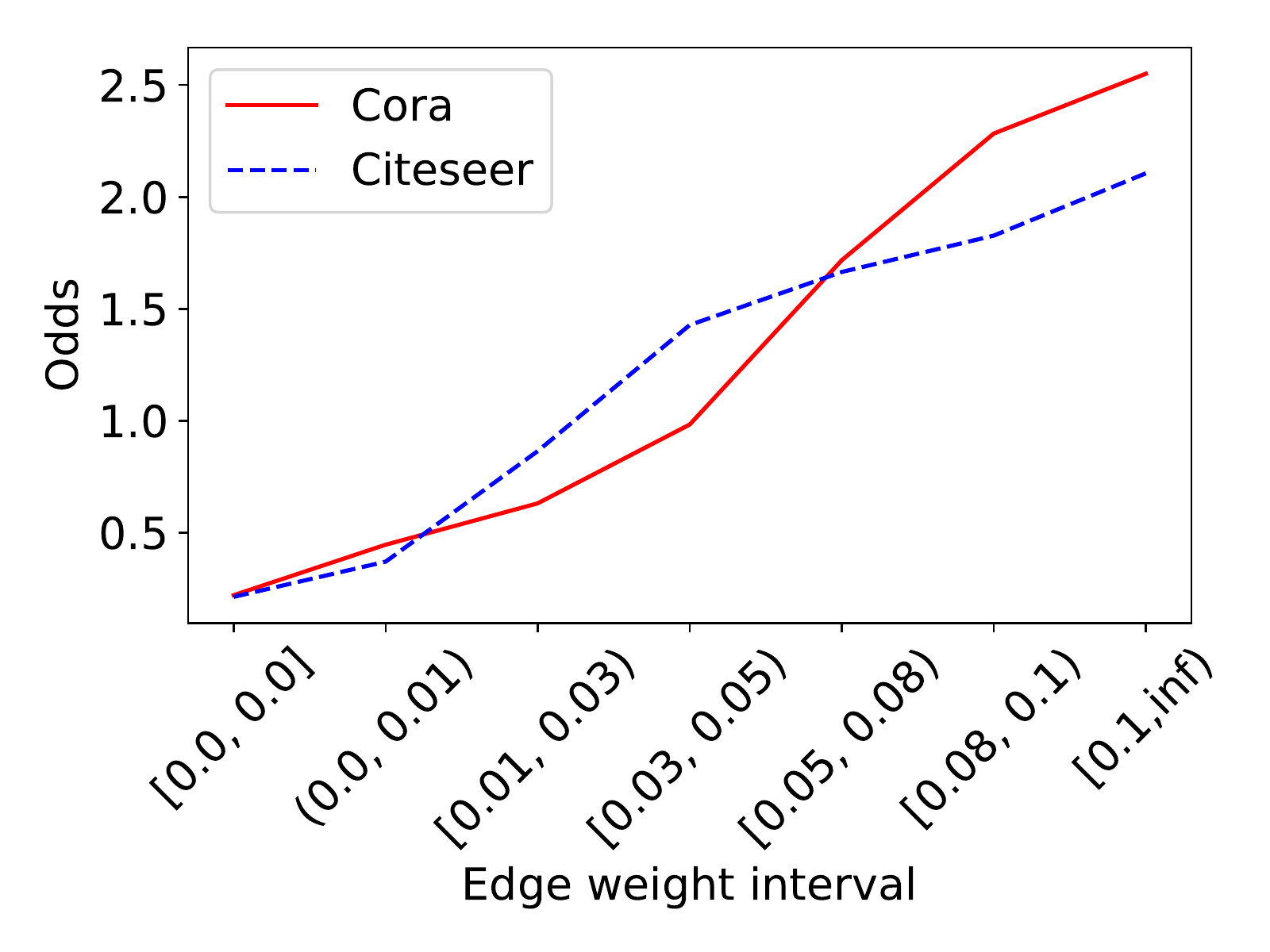}
    \caption{The odds of two nodes in the test set sharing the same label as a function of the edge weights learned by SLAPS.}
    \label{fig:odds}
\end{wrapfigure}
while SLAPS with $\lambda=0$ only removed $62.8\%$ and $54.9\%$ of the noisy edges and recovered $51.4\%$ and $35.8\%$ of the removed edges. This provides evidence on self-supervision being helpful for structure learning. 

\textbf{Homophily:} 
As explained earlier, a properly learned graph for semi-supervised classification with GNNs exhibits high homophily.
To verify the quality of the learned adjacency with respect to homophily, for every pair of nodes in the test set, we compute the odds of the two nodes sharing the same label as a function of the normalized weight of the edge connecting them. Figure~\ref{fig:odds} represents the odds for different weight intervals (recall that $\mA$ is row and column normalized). For both Cora and Citeseer, nodes' connected with higher edge weights are more likely to share the same label compared to nodes with lower or zero edge weights. Specifically, when $\mA_{ij}\geq 0.1$, $v_i$ and $v_j$ are almost 2.5 and 2.0 times more likely to share the same label on Cora and Citeseer respectively.

\section{Conclusion}
We proposed SLAPS: a model for learning the parameters of a graph neural network and a graph structure of the nodes connectivities simultaneously from data. We identified a supervision starvation problem that emerges for graph structure learning, especially when training data is scarce. We proposed a solution to the supervision starvation problem by supplementing the training objective with a well-motivated self-supervised task. We showed the effectiveness of our model through a comprehensive set of experiments and analyses.

\section{Funding Transparency Statement} 
This work was fully funded by Borealis AI.

\bibliography{MyBib.bib}
\bibliographystyle{plainnat}

\appendix

\section{More Experiments and Analyses}
\begin{wrapfigure}{r}{0.4\columnwidth}
   \includegraphics[width=0.4\textwidth]{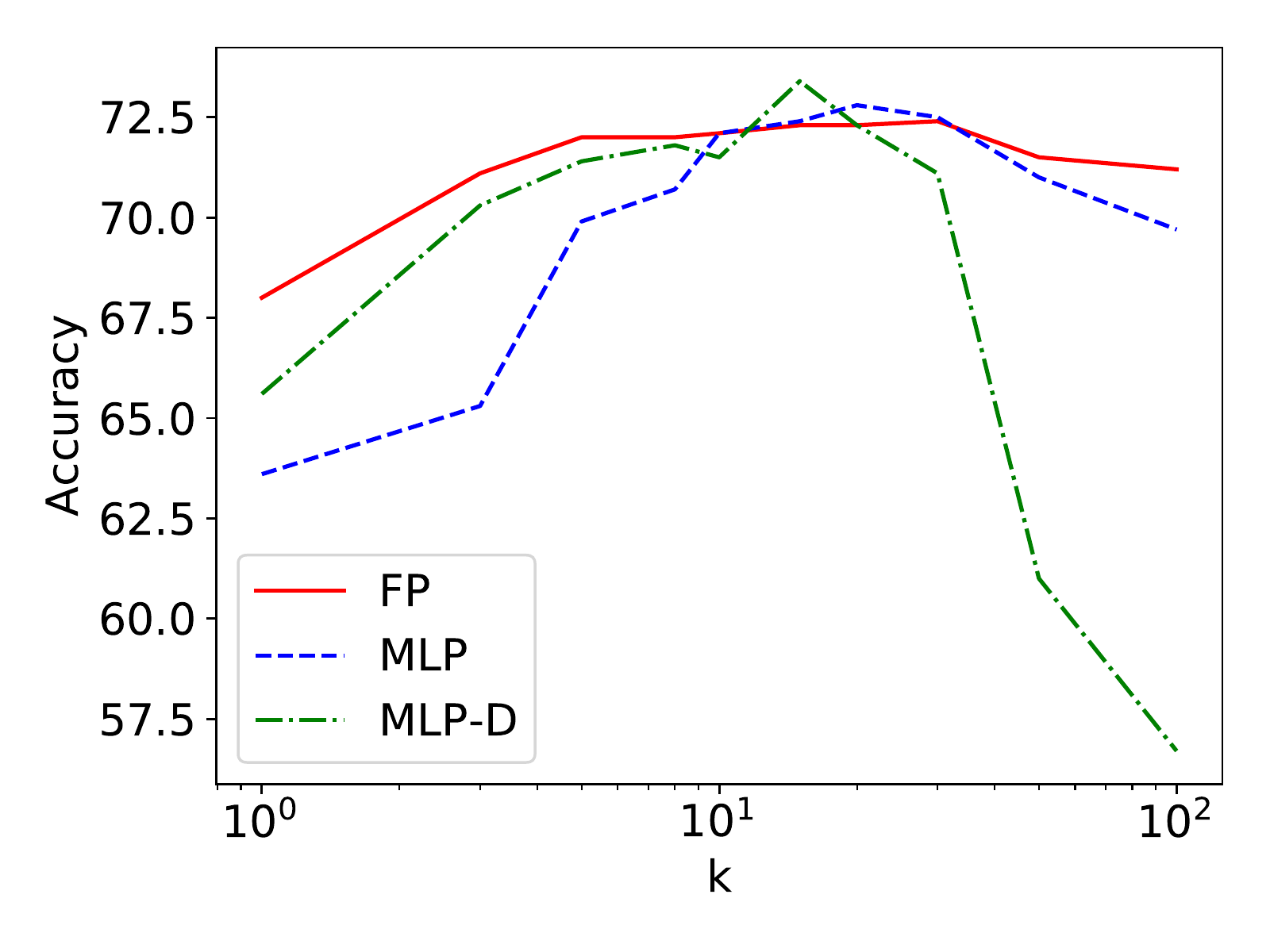}
   \caption{%
   \label{fig:knn} %
   The performance of SLAPS on Cora as a function of $k$ in kNN.}
\end{wrapfigure}
\textbf{Importance of $\bm{k}$ in kNN:} Figure~\ref{fig:knn} shows the performance of SLAPS on Cora for three graph generators as a function of $k$ in kNN. For all three cases, the value of $k$ plays a major role in model performance. The FP generator is the least sensitive because, in FP, $k$ only affects the initialization of the adjacency matrix but then the model can change the number of neighbors of each node.
For MLP and MLP-D, however, the number of neighbors of each node remains close to $k$ (but not necessarily equal as the adjacency processor can add or remove some edges) and the two generators become more sensitive to $k$. For larger values of $k$, the extra flexibility of the MLP generator enables removing some of the unwanted edges through the function $\func{P}$ or reducing the weights of the unwanted edges resulting in MLP being less sensitive to large values of $k$ compared to MLP-D.

\textbf{Increasing the number of layers:} In the main text, we described how some edges may receive no supervision during latent graph learning. We pointed out that while increasing the number of layers of the GCN may alleviate the problem to some extent, deeper GCNs typically provide inferior results due to issues such as oversmoothing \citep[see, e.g.,][]{li2018deeper,oono2020graph}. We empirically tested deeper GCNs for latent graph learning to see if simply using more layers can obviate the need for the proposed self-supervision. Specifically, we tested SLAPS without self-supervision (i.e. $\lambda=0$) with 2, 4, and 6 layers on Cora. We also added residual connections that have been shown to help train deeper GCNs \cite{li2019deepgcns}. The accuracies for 2, 4, and 6-layer models are 66.2\%, 67.1\%, and 55.8\% respectively. It can be viewed that increasing the number of layers from 2 to 4 provides an improvement. This might be because the benefit provided by a 4-layer model in terms of alleviating the starved edge problem outweighs the increase in oversmoothing. However, when the number of layers increases to 6, the oversmoothing problem outweighs and the performance drops significantly. Further increasing the number of layers resulted in even lower accuracies.

\textbf{Symmetrization:} In the adjacency processor, we used the following equation:
\begin{equation*}
    \mA = \mD^{-\frac{1}{2}}\Big(\frac{\func{P}(\tilde{\mA})+\func{P}(\tilde{\mA})^T}{2}\Big)\mD^{-\frac{1}{2}} 
\end{equation*}
\begin{wrapfigure}{r}{0.4\columnwidth}
   \includegraphics[width=0.4\textwidth]{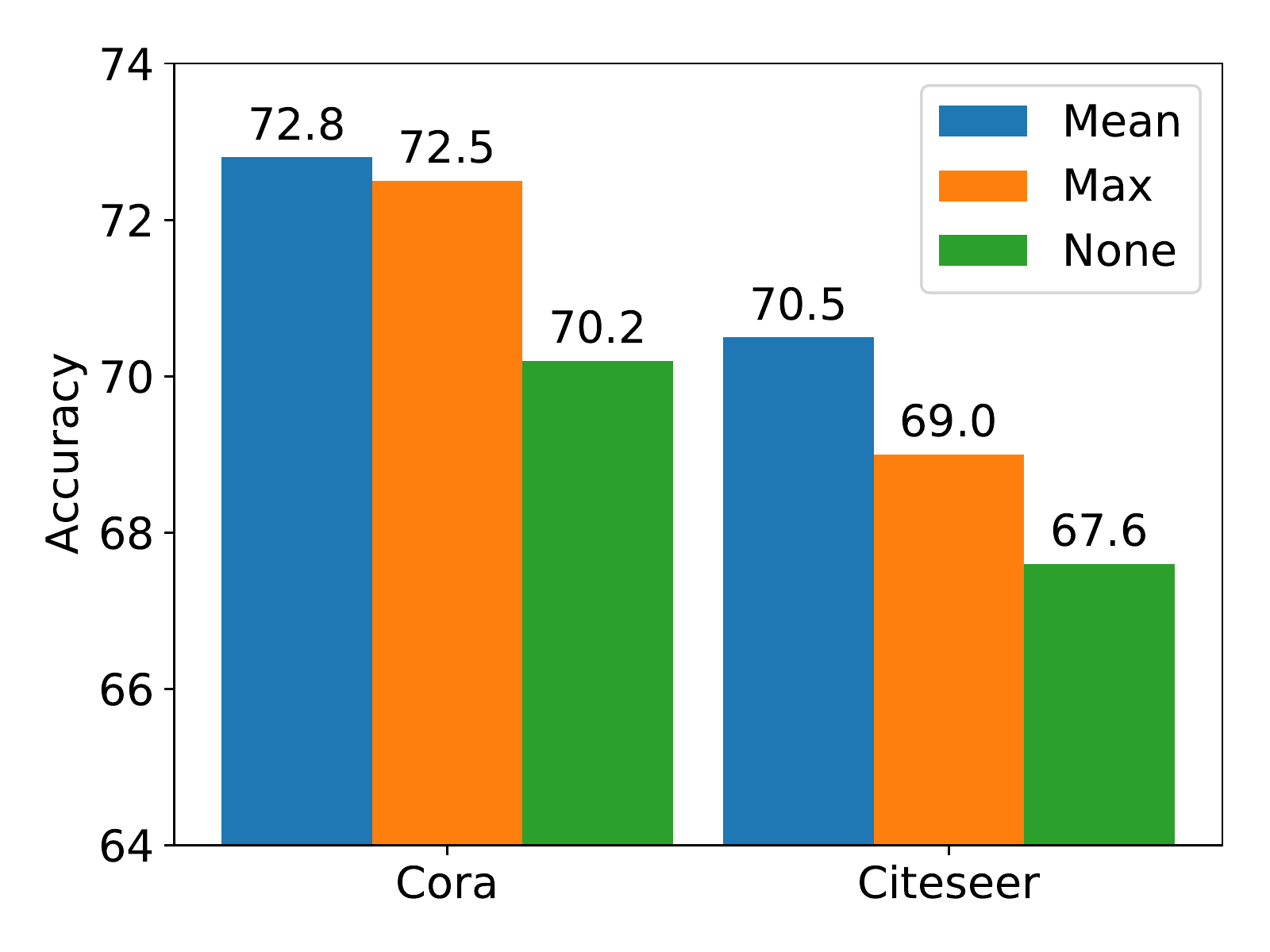}
   \caption{%
   \label{fig:symm} %
   The performance of SLAPS on Cora and Citeseer with different adjacency symmetrizations.
   }
\end{wrapfigure}
which symmetrized the adjacency matrix by taking the average of $\func{P}(\tilde{\mA})$ and $\func{P}(\tilde{\mA})^T$. Here we also consider two other choices: 1) $\func{max}(\func{P}(\tilde{\mA})$, $\func{P}(\tilde{\mA})^T$), and 2) not symmetrizing the adjacency (i.e. using $\func{P}(\tilde{\mA})$).
Figure~\ref{fig:symm} compares these three choices on Cora and Citeseer with an MLP generator (other generators produced similar results). On both datasets, symmetrizing the adjacency provides a performance boost. 
Compared to mean symmetrization, max symmetrization performs slightly worse. This may be because max symmetrization does not distinguish between the case where both $v_i$ and $v_j$ are among the $k$ most similar nodes of each other and the case where only one of them is among the $k$ most similar nodes of the other.

\textbf{Fixing a prior graph manually instead of using self-supervision:} In the main text, we validated Hypothesis 1 by adding a self-supervised task to encourage learning a graph structure that is appropriate for predicting the node features, and showing in our experiments how this additional task helps improve the results. Here, we provide more evidence for the validity of Hypothesis 1 by showing that we can obtain good results even when regularizing the learned graph structure toward a manually fixed structure that is appropriate for predicting the node features. 
\begin{wrapfigure}{r}{0.4\columnwidth}
   \includegraphics[width=0.4\textwidth]{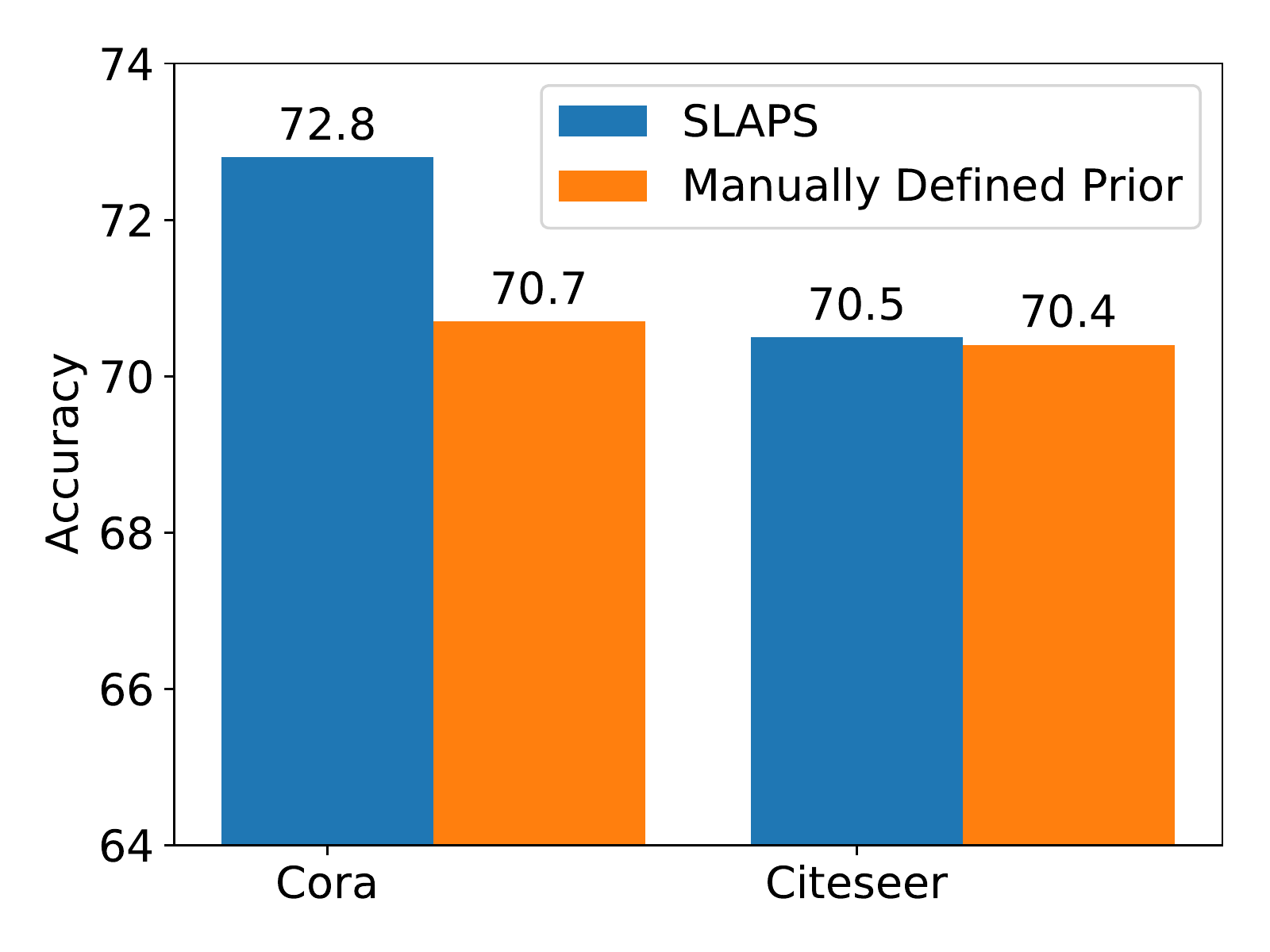}
   \caption{%
   \label{fig:slaps-vs-reg} %
   The performance of SLAPS and regularization toward a manually defined prior structure on Cora and Citeseer when using the MLP generator.
   }
\end{wrapfigure}

Toward this goal, we experimented with Cora and Citeseer and created a cosine similarity graph as our prior graph $\mA^{prior}$ where the edge weights correspond to the cosine similarity of the nodes. We sparsified $\mA^{prior}$ by connecting each node only to the $k$ most similar nodes. Then, we added a term $\lambda || \mA - \mA^{prior} ||_F$ to the loss function where $\lambda$ is a hyperparameter, $\mA$ is the learned graph structure (i.e. the output of the graph generator), and $|| . ||_F$ shows the Frobenius norm. Note that $\mA^{prior}$ exhibits homophily with respect to the node features because the node features in Cora and Citeseer are binary, so two nodes that share the same values for more features have a higher similarity and are more likely to be connected.

The results can be viewed in Figure~\ref{fig:slaps-vs-reg}. According to the results, we can see that regularizing toward a manually designed $\mA^{prior}$ also provides good results but falls short of SLAPS with self-supervision. The superiority of the self-supervised approach compared to the manual design could be due to two reasons. 
\begin{itemize}
\item Some of the node features may be redundant (e.g., they may be derived from other features) or highly correlated. These features can negatively affect the similarity computations for the prior graph in $\mA^{prior}$. As an example, consider three nodes with seven binary features $[0, 0, 0, 1, 1, 1, 1]$, $[0, 0, 0, 0, 0, 0, 0]$ and $[1, 1, 1, 1, 1, 1, 1]$ respectively and assume the last two features for each node are always equivalent and are computed based on a \emph{logical and} of the 4th and 5th features\footnote{For the first node in the example, the 4th and 5th features are both $1$ so their logical and is also $1$ and so the last two features for this node are both $1$. The computation for the other two nodes is similar.}. Without these two features, the first node is more similar to the second than the third node, but when considering these derived features, it becomes more similar to the third node. This change in node similarities affects the construction of $\mA^{prior}$ which can deteriorate the overall performance of the model. The version of SLAPS with the self-supervised task, on the other hand, is not affected by this problem as much because the model can learn to predict the derived node features based on other features and without heavily relying on the graph structure.

\item While many graph structures may be appropriate for predicting the node features, in the manual approach we only regularize toward one particular such structure. Using the self-supervised task, however, SLAPS can learn any of those structures; ideally, it learns the one that is more suited for the downstream task due to the extra supervision coming from the downstream task.
\end{itemize}

\section{Implementation Details}
We implemented our model in PyTorch \citep{paszke2017automatic}, used deep graph library (DGL) \citep{wang2019deep} for the sparse operations, and used Adam~\citep{adam} as the optimizer. We performed early stopping and hyperparameter tuning based on the accuracy on the validation set for all datasets except Wine and Cancer. For these two datasets, the validation accuracy reached 100 percent with many hyperparameter settings, making it difficult to select the best set of hyperparameters. Instead, we used the validation cross-entropy loss for these two datasets. 

We fixed the maximum number of epochs to $2000$.
We use two-layer GCNs for both $\func{GNN_C}$ and $\func{GNN_{DAE}}$ as well as for baselines and two-layer MLPs throughout the paper (for experiments on ogbn-arxiv, although the original paper uses models with three layers and with batch normalization after each layer, to be consistent with our other experiments we used two layers and removed the normalization). We used two learning rates, one for $\func{GCN_C}$ as $lr_{C}$ and one for the other parameters of the models as $lr_{DAE}$. We tuned the two learning rates from the set $\{0.01, 0.001\}$. We added dropout layers with dropout probabilities of $0.5$ after the first layer of the GNNs. We also added dropout to the adjacency matrix for both $\func{GNN_C}$ and $\func{GNN_{DAE}}$ as $dropout_{C}$ $dropout_{DAE}$ respectively and tuned the values from the set $\{0.25, 0.5\}$. 
We set the hidden dimension of $\func{GNN_C}$ to $32$ for all datasets except for ogbn-arxiv for which we set it to $256$.
We used cosine similarity for building the kNN graphs and tuned the value of $k$ from the set $\{10, 15, 20, 30\}$.
We tuned $\lambda$ ($\lambda$ controls the relative importance of the two losses) from the set $\{0.1, 1, 10, 100, 500\}$. We tuned $r$ and $\eta$ from the sets $\{1, 5, 10\}$ and $\{1, 5\}$ respectively.
The best set of hyperparameters for each dataset chosen on the validation set is in table~\ref{tab:hyperparameters}.
The code of our experiments will be available upon acceptance of the paper.

For GRCN~\citep{GRCN}, DGCNN~\citep{DGCNN}, and IDGL~\citep{IDGL}, we used the code released by the authors and tuned the hyperparameters as suggested in the original papers. 
The results of LDS~\citep{franceschi2019learning} are directly taken from the original paper. 
For LP~\cite{zhu2003semi}, we used scikit-learn python package~\cite{pedregosa2011scikit}.

All the results for our model and the baselines are averaged over 10 runs. We report the mean and standard deviation.
We ran all the experiments on a single GPU (NVIDIA GeForce GTX 1080 Ti).

\textbf{Self-training and AdaEdge:} We combined SLAPS (and kNN-GCN) with two techniques from the literature namely \emph{self-training} and \emph{AdaEdge}. For completeness sake, we provide a brief description of these approaches and refer the reader to the original papers for detailed descriptions.

For self-training, we first trained a model using the existing labels in the training set. Then we used this model to make predictions for the unlabeled nodes that were not in the train, validation, or test sets. We considered the label predictions for the top $\zeta$ most confident unlabeled nodes as ground truth labels and added them to the training labels. Finally, we trained a model from scratch on the expanded set of labels. Here, $\zeta$ is a hyperparameter. We tuned its value from the set $\{50, 100, 200, 300, 400, 500\}$.

For AdaEdge, in the case of kNN-GCN, we first trained a kNN-GCN model. Then we changed the structure of the graph from the kNN graph to a new graph by following these steps: 1) add edges between nodes with the same class predictions if both prediction confidences surpass a threshold, 2) remove edge between nodes with different class predictions if both prediction confidences surpass a threshold. Then, we trained a GCN model on the new structure and repeated the aforementioned steps to generate a new structure. We did this iteratively until generating a new structure did not provide a boost in performance on the validation set. For SLAPS, we followed a similar approach except that the initial model was a SLAPS model instead of a kNN-GCN model.

\textbf{kNN Implementation:} For our MLP generator, we used a kNN operation to sparsify the generated graph. Here, we explain how we implemented the kNN operation to avoid blocking the gradient flow. Let $\mM\in\mathbb{R}^{n\times n}$ with $\mM_{ij}=1$ if $v_j$ is among the top $k$ similar nodes to $v_i$ and $0$ otherwise, and let $\mS\in\mathbb{R}^{n\times n}$ with $\mS_{ij} = \func{Sim}(\mX'_i, \mX'_j)$ for some differentiable similarity function $\func{Sim}$ (we used cosine). Then  $\tilde{\mA}=\func{kNN}(\mX')=\mM\odot\mS$ where $\odot$ represents the Hadamard (element-wise) product. With this formulation, in the forward phase of the network, one can first compute the matrix $\mM$ using an off-the-shelf k-nearest neighbors algorithm and then compute the similarities in $\mS$ only for pairs of nodes where $\mM_{ij}=1$. In our experiments, we compute exact k-nearest neighbors; one can approximate it using locality-sensitive
hashing approaches for larger graphs (see, e.g., \cite{halcrow2020grale,kitaev2020reformer}). In the backward phase of our model, we compute the gradients only with respect to those elements in $\mS$ whose corresponding value in $\mM$ is $1$ (i.e. those elements $\mS_{ij}$ such that $\mM_{ij}=1$); the gradient with respect to the other elements is $0$.
Since $\mS$ is computed based on $\mX'$, the gradients flow to the elements in $\mX'$ (and consequently to the weights of the MLP) through $\mS$. 

\textbf{Adjacency processor:} We used a function $\func{P}$ in our adjacency processor to make the values of the $\tilde{\mA}$ positive. In our experiments, when using an MLP generator, we let $\func{P}$ be the ReLU function applied element-wise on the elements of $\tilde{\mA}$. When using the fully-parameterized (FP) generator, applying ReLU results in a gradient flow problem as any edge whose corresponding value in $\tilde{\mA}$ becomes less than or equal to zero stops receiving gradient updates. For this reason, for FP we apply the ELU~\cite{elu} function to the elements of $\tilde{\mA}$ and then add a value of $1$.

\section{Dataset statistics}
The statistics of the datasets used in the experiments can be found in Table~\ref{tab:datasets}.

\begin{table*}
\footnotesize
\caption{Best set of hyperparameters for different datasets chosen on validation set.}
\label{tab:hyperparameters}
\begin{center}
\begin{tabular}{c|c|cccccccccc}
Dataset & Generator& $lr_{C}$ & $lr_{DAE}$ & $dropout_{c}$ & $dropout_{DAE}$ & $k$ & $\lambda$ & $r$ & $\eta$\\ \hline
Cora & FP & 0.001 & 0.01 & 0.5 & 0.25& 30 & 10 & 10 & 5\\
Cora & MLP & 0.01 & 0.001 & 0.25 & 0.5 & 20 & 10 & 10 & 5\\
Cora & MLP-D & 0.01 & 0.001 & 0.25 & 0.5 & 15 & 10 & 10 & 5\\ \hline
Citeseer & FP & 0.01 & 0.01 & 0.5 & 0.5 & 30 & 1 & 10 & 1\\
Citeseer & MLP & 0.01 & 0.001 & 0.25 & 0.5 & 30 & 10 & 10 & 5\\
Citeseer & MLP-D & 0.001 & 0.01 & 0.5 & 0.5 & 20 & 10 & 10 & 5\\ \hline
Cora390 & FP & 0.01 & 0.01 & 0.25 & 0.5 & 20 & 100 & 10 & 5\\
Cora390 & MLP & 0.01 & 0.001 & 0.25 & 0.5 & 20 & 10 & 10 & 5\\
Cora390 & MLP-D & 0.001 & 0.001 & 0.25 & 0.5 & 20 & 10 & 10 & 5\\ \hline
Citeseer370 & FP & 0.01 & 0.01 & 0.5 & 0.5 & 30 & 1 & 10 & 1\\
Citeseer370 & MLP & 0.01 & 0.001 & 0.25 & 0.5 & 30 & 10 & 10 & 5\\
Citeseer370 & MLP-D & 0.01 & 0.01 & 0.25 & 0.5 & 20 & 10 & 10 & 5\\ \hline
Pubmed & MLP & 0.01 & 0.01 & 0.5 & 0.5 & 15 & 10 & 10 & 5\\
Pubmed & MLP-D & 0.01 & 0.01 & 0.25 & 0.25 & 15 & 100 & 5 & 5\\ \hline
ogbn-arxiv & MLP & 0.01 & 0.001 & 0.25 & 0.5 & 15 & 10 & 1 & 5\\
ogbn-arxiv & MLP-D & 0.01 & 0.001 & 0.5 & 0.25 & 15 & 10 & 1 & 5\\ \hline
Wine & FP & 0.01 & 0.001 & 0.5 & 0.5 & 20 & 0.1 & 5 & 5\\
Wine & MLP & 0.01 & 0.001 & 0.5 & 0.25 & 20 & 0.1 & 5 & 5\\
Wine & MLP-D & 0.01 & 0.01 & 0.25 & 0.5 & 10 & 1 & 5 & 5\\ \hline
Cancer & FP & 0.01 & 0.001 & 0.5 & 0.25 & 20 & 0.1 & 5 & 5\\
Cancer & MLP & 0.01 & 0.001 & 0.5 & 0.5 & 20 & 1.0 & 5 & 5\\
Cancer & MLP-D & 0.01 & 0.01 & 0.5 & 0.5 & 20 & 0.1 & 5 & 5\\ \hline
Digits & FP & 0.01 & 0.001 & 0.25 & 0.5 & 20 & 0.1 & 5 & 5\\
Digits & MLP & 0.01 & 0.001 & 0.25 & 0.5 & 20 & 10 & 5 & 5\\
Digits & MLP-D & 0.01 & 0.001 & 0.5 & 0.25 & 15 & 0.1 & 5 & 5\\ \hline
20news & FP & 0.01 & 0.01 & 0.5 & 0.5 & 20 & 500 & 5 & 5\\
20news & MLP & 0.001 & 0.001 & 0.25 & 0.5 & 20 & 500 & 5 & 5\\
20news & MLP-D & 0.01 & 0.01 & 0.25 & 0.25 & 20 & 100 & 5 & 5\\
\hline
MNIST (1000) & MLP & 0.01 & 0.01 & 0.5 & 0.5 & 15 & 10 & 10 & 5\\
MNIST (2000) & MLP-D & 0.01 & 0.001 & 0.5 & 0.5 & 15 & 100 & 10 & 5\\
MNIST (3000) & MLP & 0.01 & 0.01 & 0.5 & 0.5 & 15 & 10 & 5 & 5
\end{tabular}
\end{center}
\end{table*}

\section{Supervision starvation in Erd\H{o}s-R\'enyi and scale-free networks}

We start by defining some new notation that helps simplify the proofs and analysis in this section. We let $l_{v}$ be a random variable indicating that $v$ is a labeled node, with $\overline{l_{v}}$ indicating that its negation, $c_{v,u}$ be a random variable indicating that $v$ is connected to $u$ with an edge, with $\overline{c_{v,u}}$ indicating its negation, and $cl_v$ be random variable indicating that $v$ is connected to at least one labeled node with $\overline{cl_v}$ indicating its negation (i.e. it indicates that $v$ is connected to no labeled nodes).  

\begin{theorem}\label{th:main}
Let $\graph{G}(n, m)$ be an Erd\H{o}s-R\'enyi graph with $n$ nodes and $m$ edges. Assume we have labels for $q$ nodes selected uniformly at random. 
The probability of an edge being a starved edge with a two-layer GCN is equal to $(1 - \frac{q}{n})(1 - \frac{q}{n-1})\prod_{i=1}^{2q}(1 - \frac{m-1}{{n \choose 2}-i})$.
\end{theorem}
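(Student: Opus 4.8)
The plan is to fix an arbitrary pair of nodes $\{v_i, v_j\}$ and compute the probability that the edge $(v_i,v_j)$, \emph{conditioned on it being present in $\graph{G}(n,m)$}, is a starved edge. With a two-layer GCN, the edge $(v_i,v_j)$ affects the loss $\loss{L}_C$ iff at least one of $v_i,v_j$ lies within the two-hop neighborhood of some labeled node; equivalently, the edge is starved iff (a) neither $v_i$ nor $v_j$ is itself labeled, and (b) neither $v_i$ nor $v_j$ is connected (by an edge) to any labeled node. In the notation of the appendix this is the event $\overline{l_{v_i}} \cap \overline{l_{v_j}} \cap \overline{cl_{v_i}} \cap \overline{cl_{v_j}}$, and I would decompose the target probability along exactly these four conditions, handled in stages: first the labeling of $v_i$ and $v_j$, then the connectivity of $v_i$ and $v_j$ to the $q$ labeled nodes.

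First I would compute $\Pr(\overline{l_{v_i}})$. Since the $q$ labeled nodes are a uniformly random $q$-subset of the $n$ nodes, $\Pr(\overline{l_{v_i}}) = 1 - \frac{q}{n}$, and then $\Pr(\overline{l_{v_j}} \mid \overline{l_{v_i}}) = 1 - \frac{q}{n-1}$ because conditioning on $v_i$ unlabeled leaves $q$ labeled among the remaining $n-1$. This yields the prefactor $(1-\frac{q}{n})(1-\frac{q}{n-1})$. Next, condition on both $v_i$ and $v_j$ being unlabeled, so all $q$ labels sit among the other $n-2$ nodes. I now need the probability that, in $\graph{G}(n,m)$ conditioned on the edge $(v_i,v_j)$ being present, none of the remaining $m-1$ edges connects $v_i$ or $v_j$ to any of these $q$ labeled nodes. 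The set of "forbidden" potential edges — those from $v_i$ or $v_j$ to a labeled node — has size $2q$ (these $2q$ slots are distinct since $v_i \neq v_j$ and neither is labeled, so there is no double-counting and the $(v_i,v_j)$ edge itself is not among them). Conditioned on $(v_i,v_j)$ present, the other $m-1$ edges form a uniformly random $(m-1)$-subset of the remaining $\binom{n}{2}-1$ potential edges; I want the probability this subset avoids all $2q$ forbidden slots. By a standard "sampling without replacement" / hypergeometric argument, process the $2q$ forbidden slots one at a time: the probability the $1$st is not chosen is $1 - \frac{m-1}{\binom{n}{2}-1}$; given that, the probability the $2$nd is not chosen is $1 - \frac{m-1}{\binom{n}{2}-2}$; and inductively the $i$-th gives $1 - \frac{m-1}{\binom{n}{2}-i}$ (there are always exactly $m-1$ "good" edges competing for the $\binom{n}{2}-i$ still-available slots, since conditioning on earlier forbidden slots being empty does not change the count of chosen edges). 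Multiplying these gives $\prod_{i=1}^{2q}\bigl(1 - \frac{m-1}{\binom{n}{2}-i}\bigr)$, and combining with the prefactor gives the claimed formula.

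I would also note briefly that the formula is the probability for a \emph{fixed} edge conditioned on its presence, which by symmetry (exchangeability of all node pairs and all edges under the $\graph{G}(n,m)$ distribution) equals the expected fraction of present edges that are starved, i.e. "the probability of an edge being a starved edge" as stated.

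The main obstacle I anticipate is the conditioning bookkeeping in the last step: one must argue carefully that conditioning on the $(v_i,v_j)$ edge being present and then peeling off the forbidden slots one at a time keeps the conditional distribution of the remaining edges uniform over the appropriate shrinking ground set, so that each factor really is $1 - \frac{m-1}{\binom{n}{2}-i}$ rather than something with a changing numerator. It is also worth double-checking the edge case that requires $2q \le \binom{n}{2} - m + 1$ (enough room for all forbidden slots to be empty) and that the $2q$ forbidden slots together with the edge $(v_i,v_j)$ are genuinely distinct — both of which follow from $v_i,v_j$ being unlabeled and distinct. Everything else is routine.
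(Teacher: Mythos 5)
Your proposal is correct and follows essentially the same route as the paper's proof: the same decomposition into the labeling event $(1-\frac{q}{n})(1-\frac{q}{n-1})$ followed by sequentially peeling off the $2q$ forbidden node--labeled-node pairs, each contributing a factor $1-\frac{m-1}{\binom{n}{2}-i}$ (the paper merely splits the product into the $q$ factors for one endpoint and the $q$ for the other). If anything, your write-up is a bit more careful than the paper's about why the conditional distribution of the remaining $m-1$ edges stays uniform over the shrinking ground set and about the exchangeability step equating the fixed-edge probability with ``the probability of an edge being starved.''
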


\begin{proof}
To compute the probability of an edge being a starved edge, we first compute the probability of the two nodes of the edge being unlabeled themselves and then the probability of the two nodes not being connected to any labeled nodes. Let $v$ and $u$ represent two nodes connected by an edge.

With $n$ nodes and $q$ labels, the probability of a node being labeled is $\frac{q}{n}$. Therefore, $Pr(\overline{l_v})=(1 - \frac{q}{n})$ and $Pr(\overline{l_u} \mid \overline{l_v})=(1 - \frac{q}{n-1})$. Therefore, $Pr(\overline{l_v}\wedge\overline{l_u})=(1 - \frac{q}{n})(1 - \frac{q}{n-1})$. 

Since there is an edge between $v$ and $v$, there are $m-1$ edges remaining. Also, there are ${n \choose 2} - 1$ pairs of nodes that can potentially have an edge between them. Therefore, the probability of $v$ being disconnected from the first labeled node is $1-\frac{m-1}{{n \choose 2}-1}$. If $v$ is disconnected from the first labeled node, there are still $m-1$ edges remaining and there are now ${n \choose 2} - 2$ pairs of nodes that can potentially have an edge between them. So the probability of $v$ being disconnected from the second node given that it is disconnected from the first labeled node is $1-\frac{m-1}{{n \choose 2}-2}$. With similar reasoning, we can see that the probability of $v$ being disconnected from the $i$-th labeled node given that it is disconnected from the first $i-1$ labeled nodes is $1-\frac{m-1}{{n \choose 2}-i}$.

We can follow similar reasoning for $u$. The probability of $u$ being disconnected from the first labeled node given that $v$ is disconnected from all $q$ labeled nodes is $1-\frac{m-1}{{n \choose 2}-q-1}$. That is because there are still $m-1$ edges remaining and ${n \choose 2}-q-1$ pairs of nodes that can potentially be connected with an edge. We can also see that the probability of $u$ being disconnected from the $i$-th labeled node given that it is disconnected from the first $i-1$ labeled nodes and that $v$ is disconnected from all $q$ labeled nodes is $1-\frac{m-1}{{n \choose 2}-q-i}$.

As the probability of the two nodes being unlabeled and not being connected to any labeled nodes in the graph are independent, their joint probability is the multiplication of their probabilities computed above and it is equal to $(1 - \frac{q}{n})(1 - \frac{q}{n-1})\prod_{i=1}^{2q}(1 - \frac{m-1}{{n \choose 2}-i})$.
\end{proof}

\textbf{Barabási–Albert and scale-free networks:} We also extend the above result for Erd\H{o}s-R\'enyi graphs to the Barabási–Albert~\cite{barabasi1999emergence} model. Since Barabási–Albert graph generation results in scale-free networks with a scale parameter $\gamma=-3$, we present results for the general case of scale-free networks as it makes the analysis simpler and more general. In what follows, we compute the probability of an edge being a starved edge in a scale-free network.

Let $\graph{G}$ be a scale-free network with $n$ nodes, $q$ labels (selected uniformly at random), and scale parameter $\gamma$. Then, if we select a random edge between two nodes $v$ and $u$, the probability of the edge between them being a starved edge is:

\begin{equation*}
Pr(\overline{l_v}) * Pr(\overline{l_u} | \overline{l_v}) * Pr(\overline{cl_v} | c_{v,u},\overline{l_v},\overline{l_u}) * Pr(\overline{cl_u} | c_{v,u},\overline{l_v},\overline{l_u},\overline{cl_v}).
\end{equation*}

Each of these terms can be computed as follows ($a \choose b$ represents the number of combinations of selecting $b$ items from a set with $a$ items):

\begin{itemize}
    \item $Pr(\overline{l_v}) = (1 - \frac{q}{n})$
    \item $Pr(\overline{l_u} | \overline{l_v}) = (1 - \frac{q}{n-1})$
    \item $Pr(\overline{cl_v} | c_{v,u},\overline{l_u},\overline{l_v}) = \frac{\sum^{n-1}_{k=1} k^{\gamma} \frac{\binom{n - q - 2}{k-1}}{\binom{n-2}{k-1}}}{\sum^{n-1}_{k=1}k^{\gamma}}$
\end{itemize}

For a large enough network, $Pr(\overline{cl_u} | c_{v,u},\overline{l_v},\overline{l_u},\overline{cl_v})$
can be approximated as  $Pr(\overline{cl_u} | c_{v,u},\overline{l_v},\overline{l_u})$ and it can be computed similarly as the previous case.

With the derivation above, for a scale-free network with $n=2708$ and $q=140$ (corresponding to the stats from Cora), the probability of an edge being a starved edge for $\gamma=-3$ is $0.87$ and for $\gamma=-2$ is $0.76$ .

\begin{table*}
\caption{Dataset statistics.}
\label{tab:datasets}
\begin{center}
\begin{tabular}{c|cccccc}
Dataset & Nodes & Edges & Classes & Features & Label rate \\ \hline
Cora & 2,708 & 5,429 & 7 & 1,433 & 0.052 \\
Citeseer & 3,327 & 4,732 & 6 & 3,703 & 0.036\\
Pubmed & 19,717 & 44,338 & 3 & 500 & 0.003\\
ogbn-arxiv & 169,343 & 1,166,243 & 40 & 128 & 0.537 \\
Wine & 178 & 0 & 3 & 13 & 0.112\\
Cancer & 569 & 0 & 2 & 30 & 0.035\\
Digits & 1,797 & 0 & 10 & 64 & 0.056\\
20news & 9,607 & 0 & 10 & 236 & 0.021\\
MNIST & 10,000 & 0 & 10 & 784 & 0.1, 0.2 and 0.3 \\
\end{tabular}
\end{center}
\end{table*}

\section{Why not compare the learned graph structures with the original ones?}
A comparison between the learned graph structures using SLAPS (or other baselines) and the original graph structure of the datasets we used may not be sensible. We explain this using an example. Before getting into the example, 
we remind the reader that the goal of structure learning for semi-supervised classification with graph neural networks is to learn a structure with a high degree of homophily. Following \cite{zhu2020beyond}, we define the \emph{edge homophily ratio} as the fraction of edges in the
graph that connect nodes that have the same class label. 

Figure~\ref{fig:homophily} demonstrates an example where two graph structures for the same set of nodes have the same edge homophily ratio (0.8 for both) but have no edges in common. For our task, it is possible that the original graph structure (e.g., the citation graph in Cora) corresponds to the structure on the left but SLAPS (or any other model) learns the graph on the right, or vice versa. While both these structures may be equally good\footnote{We are disregarding the features for simplicity sake.}, they do not share any edges. Therefore, measuring the quality of the learned graph using SLAPS by comparing it to the original graph of the datasets may not be sensible. However, if a noisy version of the initial structure is provided as input for SLAPS, then one may expect that SLAPS recovers a structure similar to the cleaned original graph and this is indeed what we demonstrate in the main text.

\begin{figure}[t]
   \includegraphics[width=\columnwidth]{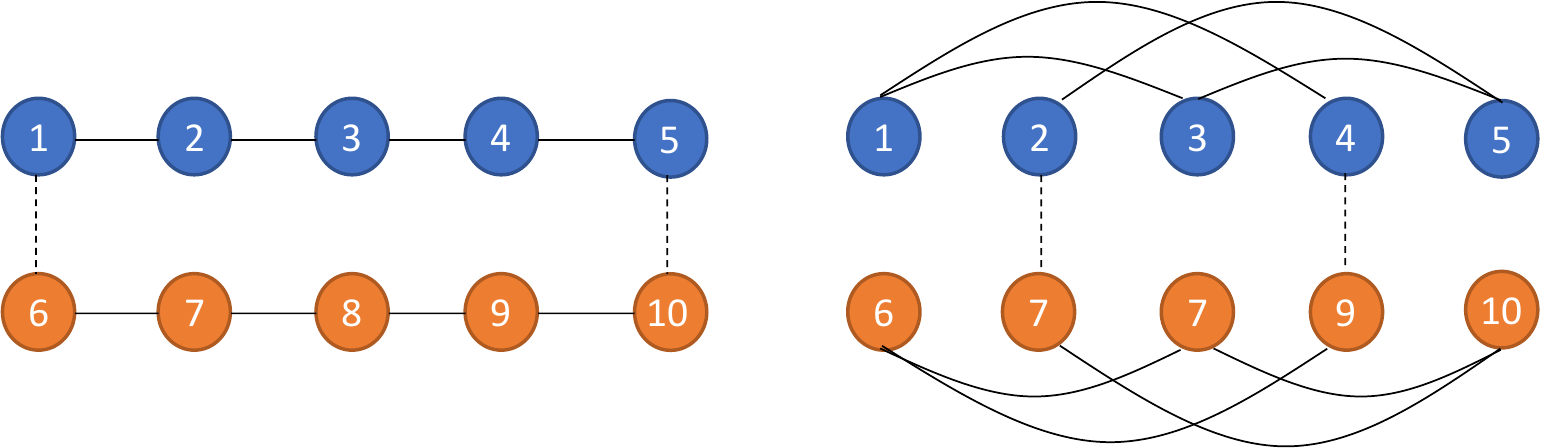}
   \caption{%
   \label{fig:homophily} %
   Two example graph structures. Node colors indicates the class labels. Solid lines indicate homophilous edges and dashed lines indicate non-homophilous edges. The two graphs exhibit the same degree of homophily yet there is not overlap between their edges.
   }
\end{figure}

\section{Limitations}
In this section, we discuss some of the limitations of the proposed model. Firstly, in cases where nodes do not have input features but an initial noisy structure of the nodes is available, our self-supervised task cannot be readily applied. One possible solution is to first run an unsupervised node embedding model such as DeepWalk \cite{perozzi2014deepwalk} to obtain node embeddings, then treat these embeddings as node features and run SLAPS. 
Secondly, the FP graph generator is not applicable in the inductive setting; this is because FP directly optimizes the adjacency matrix. However, our other two graph generators (MLP and MLP-D) can be applied in the inductive setting.

\end{document}